\newcommand{\Do}[1]{\text{do}(#1)}
\newcommand{\man}[1]{\text{man}(#1)}
\newcommand{\calI}{\mathcal{I}}
\newcommand{\calJ}{\mathcal{J}}
\newtheorem{theorem}{Theorem}
\newtheorem{definition}[theorem]{Definition}
\titlespacing\section{0pt}{12pt plus 4pt minus 2pt}{0pt plus 2pt minus 2pt}
\titlespacing\subsection{0pt}{12pt plus 4pt minus 2pt}{0pt plus 2pt minus 2pt}
\titlespacing\subsubsection{0pt}{12pt plus 4pt minus 2pt}{0pt plus 2pt minus 2pt}
\begin{document}
\twocolumn[
%\aistatstitle{Multi-Level Cause-Effect Systems}
\aistatstitle{Multi-Level Cause-Effect Systems}
\aistatsauthor{ Krzysztof Chalupka \And Pietro Perona \And Frederick Eberhardt}
\aistatsaddress{California Institute of Technology}
]

\begin{abstract}
We present a domain-general account of causation that applies to settings in which macro-level causal relations between two systems are of interest, but the relevant causal features are poorly understood and have to be aggregated from vast arrays of micro-measurements. Our approach generalizes that of \citet{Chalupka2015} to the setting in which the macro-level effect is not specified. %The framework is motivated by applications in neuroscience, where images are used as visual stimuli to investigate cognitive processes in the brain. There, it is not clear how to delineate the relevant visual features or the resulting neural processes. 
We formalize the connection between micro- and macro-variables in such situations and provide a coherent framework describing causal relations at multiple levels of analysis. We present an algorithm that discovers macro-variable causes and effects from micro-level measurements obtained from an experiment. We further show how to design experiments to discover macro-variables from observational micro-variable data. Finally, we show that under specific conditions, one can identify multiple levels of causal structure. Throughout the article, we use a simulated neuroscience multi-unit recording experiment to illustrate the ideas and the algorithms. 
\end{abstract}

\section{INTRODUCTION}
In many scientific domains, detailed measurement is an indirect tool to construct and identify macro-level features of interest which are not yet fully understood. For example, climate science uses satellite images and radar data to understand large scale weather patterns. In neuroscience, brain scans or neural recordings constitute the basis for research into cognition. In medicine, body monitors and gene sequencing are used to predict macro-level states of the human body, such as health outcomes. In each case the aim is to use the micro-level data to discover what the relevant macro-level features are that drive, say, the ``El Ni{\~n}o'' weather pattern, face recognition or debilitating diseases. We propose a principled approach for the identification of macro-level causes and effects from high-dimensional micro-level measurements. Standard approaches to causation using graphical models \citep{Spirtes2000, Pearl2009} or potential outcomes \citep{Rubin1974} presuppose this step---these methods focus on discovering the causal relations among a \emph{given} set of well-defined causal variables. Our approach does not rely on domain experts to identify the causal relata but constructs them automatically from data.

Throughout the article we use the setting of a neuroscientific experiment with high-dimensional input stimuli (images), and high-dimensional output measurements (multi-unit recordings) to illustrate our approach. We emphasize, however, that our theoretical results are entirely domain-general. 

Our contribution is threefold:
\begin{compactenum}
\item We rigorously define how the constitutive relations (supervenience) between micro- and macro-variables combine with the causal relations among the macro-variables when both the macro-cause and macro-effect have not been pre-defined, but have to be constructed from micro-level data. The key concept is the \emph{fundamental causal partition}, which is the coarsest macro-level description of the system that retains all the causal (but not merely correlational) information about it. We show how it can be learned from experimental data.
\item We prove a generalization of the Causal Coarsening Theorem of \citet{Chalupka2015} that now applies to high-dimensional input \emph{and} high-dimensional output. The theorem enables efficient experiment design for learning the fundamental causal partition when experimental data is hard to obtain (but observational data is readily available).
\item We identify the conditions under which it is possible to have causal descriptions of a system at multiple levels of aggregation, and show how these can be learned.
\end{compactenum}

Code that implements our algorithms and reproduces the full simulated experiment is available online at \url{http://vision.caltech.edu/~kchalupk/code.html}.

\subsection{A Motivating Example: Visual Neurons}
\label{sec:example}
Our research is partially inspired by a problem at the core of much of modern neuroscience: Can we detect which features of a visual stimulus result in particular responses of neural populations without pre-defining the stimulus features or the types of population response? 

For example, ~\citet{Rutishauser2011} analyze data from multiple electrodes implanted in human amygdala. The patient is asked to look at images containing either whole human faces, faces randomly occluded with Gaussian ``bubbles'', or images of specific regions of interest in the face---say the eye or the mouth. The neurons are then sorted according to whether they are full-face selective or not, and the response properties of the neurons are analyzed in the two populations. This set-up is an instance of a widely used experimental protocol in the field: prepare stimuli that represent various hypotheses about what the neurons respond to; record from single or multiple units; and analyze the responses with respect to the candidate hypotheses. 

But what if the candidate hypotheses are wrong? Or if they do not line up cleanly with the actually relevant features? Our method proposes a less biased and more automatized process of experimentation: Record neural population responses to a broad set of stimuli. Then jointly analyze what features of the stimuli modify responses of the neural population \emph{and} what features of neural activity are changing in response to the stimuli. To our knowledge, such joint cause-and-effect learning is a novel contribution not only in the neuroscientific setting, but to a whole array of other scientific disciplines.

We will use a simple neural population response simulation as a running example throughout the article. In the simulation (see Fig.~\ref{fig:example}), we observe a population of 100 neurons which act spontaneously using dynamics defined by Izhikevich's equations~\citep{Izhikevich2003}. The equations are designed to reproduce the behavior of human cortical neurons. As the ground-truth structures of interest, we define simple macro-level causes and effects: Presented with an image containing a horizontal bar (h-bar), the ``top half'' of the neural population produces a pulse of joint activity. When presented with a vertical bar (v-bar), the same population synchronizes in a 30Hz rhythm. The remaining (``bottom half'') population acts independently of the visual stimuli (perhaps the experimenter unwittingly placed some of the electrodes in a non-visual brain area). Half the time these ``distractor neurons'' follow their spontaneous noisy dynamics, and half the time they synchronize to produce a rhythmic activity. One can think of this activity as being caused by internal network dynamics, extra-visual stimuli, the animal's hunger or any other cause, as long as it is independent of the image presented by the experimenter. 

\begin{figure}
\centering
\includegraphics{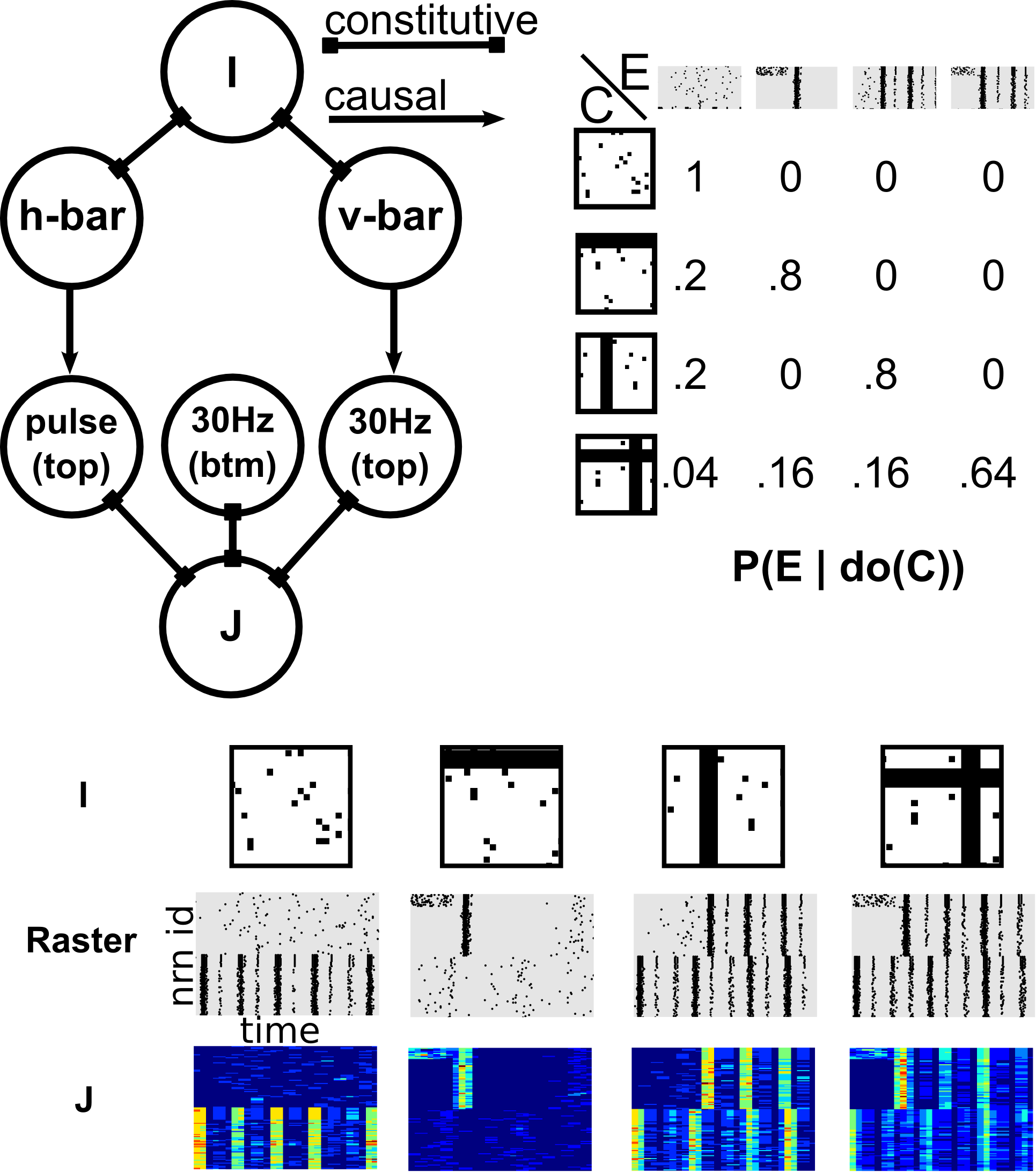}
\caption{\textbf{A simulated neuroscience experiment}. A stimulus image $I$ can contain a horizontal bar (h-bar), a vertical bar (v-bar), neither, or both (plus uniform pixel noise). In response to an image, a simulated population of neurons (the ``top'' population) can produce a single pulse of joint activity, a 30 Hz rhythm, both, or neither, with probabilities $P(\text{pulse} \mid \Do{\text{h-bar}})=0.8$ and $P(\text{30Hz}\mid \Do{\text{v-bar}})=0.8$. These two causal mechanisms compose to yield the full response probability table shown in the top right. In addition, another (``bottom'') population of neurons can exhibit a rhythmic activity independent of the stimulus image. The system's output $J$ is a 10ms-window running average of the neural rasters, with the neuron indices shuffled (as a neuroscientist has no a-priori knowledge of how to order neurons). Here we show example $J$'s sorted by neuron id; we use the shuffled version in our experiments.}
\label{fig:example}
\end{figure}

The example is made up of deliberately simple features for ease of illustration and interpretation. Nevertheless, it hints at what makes similar problems non-trivial to solve. The causal features can be convoluted with salient, probabilistic structure (such as the rhythmic behaviors generated in the ``bottom'' neuronal population). Moreover, the data and its features can be difficult to interpret directly ``by looking'': after reshuffling the neural indices, the raster plots are hardly distinguishable by the human eye, and in many domains (e.g.\ in finance) the data have no special spatial structure, since they can consist simply of rows of numbers.

\section{MACRO-CAUSES AND -EFFECTS}

\citet{Chalupka2015} provide a method to discover from image pixels the macro-level visual cause of a pre-defined macro-level ``target behavior''. In contrast, we do not assume that the macro-level effect (their ``target behavior'') is already specified. Instead, in a generalization of their framework, we simultaneously recover the macro-level cause $C$ and effect $E$ from micro-variable data. Adopting much of their notation, we repeat and generalize their main definitions here, and refer the reader to the original paper for a more detailed explanation.

\subsection{Multi-Level Systems: a Generative Model}

Let $\calI\subset \mathbb{R}^m$ and $\calJ\subset\mathbb{R}^n$ be two finite sets of possibly huge cardinality -- for example, $\calI$ could be the set of all the 100$\times$100 32-bit RGB images\footnote{In this article we will adopt the common practice of referring to such digitalized continuous data as ``high-dimensional''.}. Let $I$ and $J$ be the random variables ranging over those respective sets. We are interested in systems that are well described by the generative model shown in Fig.~\ref{fig:genmodel}, which we call a (causal) \emph{multi-level system}, or ml-system, for reasons that will become evident. In an ml-system, the probability distribution over $I$ is determined by an independent ``noise'' variable $\epsilon_I$ and a (confounding) variable $H$. Both $\epsilon_I$ and $H$ are assumed to be discrete but can have very high cardinality. $J$ is generated analogously, except that it is also caused by $I$. The joint probability distribution over $I$ and $J$ is thus given by:

\begin{equation}
P(J, I) = \sum_H P(J\mid I, H) P(I \mid H) P(H)\notag.\label{eq:genmodel}
\end{equation} 
The independent noise variables $\epsilon_I$ and $\epsilon_J$ are marginalized out and omitted in the above equation for clarity.
\begin{figure}
\centering
\includegraphics{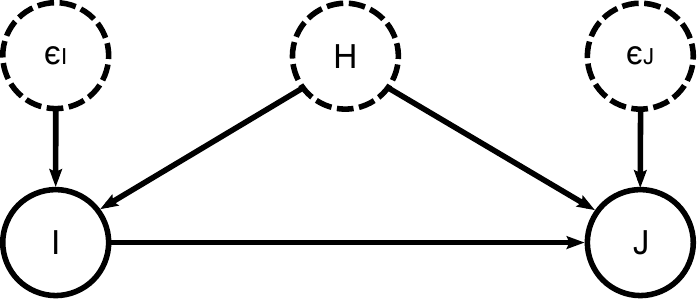}
\caption{\textbf{The generative model of a causal ml-system}. Dashed nodes indicate variables that are not measured. All variables are discrete but can be of huge cardinality. The ``input'' $I$ causes the ``output'' $J$. In addition, the two can be confounded by a hidden variable $H$. Finally, $I$ can contain salient probabilistic structure independent of $J$, and vice-versa.}
\label{fig:genmodel}
\end{figure}

\subsection{The Fundamental Causal Partition}\label{sec:fcp}
An important challenge of causal analysis is to distinguish between dependencies invariant under intervention and those that arise due to confounding. That is, following \citet{Pearl2009} we want to distinguish between the observational conditional probability of $P(Y\mid X)$ for two variables $X$ and $Y$ and the causal probability arising from an intervention on $X$, namely, $P(Y\mid \Do{X})$. An ml-system is sufficiently general to represent dependencies between $I$ and $J$ that remain invariant under intervention and those that are only due to confounding (by $H$). 

In addition, we want to distinguish between micro-variables and the macro-variables that stand in a \emph{constitutive} relation to the micro-variables: An intervention on the micro-variables fixes the macro-variables (for example, the exact spiking time of every neuron determines whether or not a pulse is present), while an intervention on the macro-variable (e.g.\ the presence of a 30Hz neural rhythm) may not uniquely fix the states of the micro-variables that constitute the macro-variable. We follow \citet{Chalupka2015} in first defining a micro-level manipulation, and reserving Pearl's $\Do{}$-operation for the interventions on a macro-variable:

\begin{definition}[Micro-level Manipulation]
A \emph{micro-level manipulation} is the operation $\man{I=i}$ (we will often simply write $\man{i}$ for a specific manipulation) that changes the micro-variables of $I$ to $i\in\calI$, while not (directly) affecting any other variables (such as $H$ or $J$). That is, the manipulated probability distribution of the generative model in Eq.~\eqref{eq:genmodel} is given by 
\[P(J \mid \man{I=i}) = \sum_{H} P(J\mid I=i, H) P(H).\]
\end{definition}

Our goal is to define (and then learn) the most compressed description of an ml-system that retains all the information about the causal effect of $I$ on $J$, that is, we want the most efficient description of the possible interventions and their effects in the system.

\begin{definition}[Fundamental Causal Partition, Causal Class]
Let $(\calI, \calJ)$ be a causal ml-system. The \emph{fundamental causal partition of $\calI$}, denoted by $\Pi_c(\calI)$ is the partition induced by the equivalence relation $\sim_I$ such that 
\begin{align*}
i_1 \sim_I i_2 & \quad \Leftrightarrow \quad \forall_{j\in \calJ} P(j\mid \man{i_1}) = P(j\mid \man{i_2}).
\end{align*}
Similarly, the \emph{fundamental causal partition of $\calJ$}, denoted by $\Pi_c(\calJ)$, is the partition induced by the equivalence relation $\sim_J$ such that 
\begin{align*}
j_1 \sim_J j_2 \quad \Leftrightarrow \quad \forall_{i\in\calI}\;P(j_1\mid \man{i}) = P(j_2 \mid \man{i}). 
\end{align*}
We call a cell of a causal partition a \emph{causal class} of $I$ or $J$.
\end{definition}

In words, two elements of $\calI$ belong to the same causal class if they have the same causal effect on $J$. Two elements of $\calJ$ belong to the same causal class if they arise equally likely after any micro-level manipulation of $I$. The causal classes are thus good candidates for our causal macro-variables:

\begin{definition}[Fundamental Cause and Effect]
In a causal ml-system $(\calI, \calJ)$, the \emph{fundamental cause} $C$ is a random variable whose value stands in a bijective relation to the causal class of $I$. The \emph{fundamental effect} $E$ is a random variable whose value stands in a bijective relation to the causal class of $J$. We will also use $C$ and $E$ to denote the functions that map each $i$ and $j$, respectively, to its causal class.\footnote{In a slight abuse of terminology we will at times use the causal macro-variables to refer to their (bijectively) corresponding partitions, for example, ``$\bar{C}$ is a coarsening of $C$''.} 
\end{definition}

When the fundamental cause and effect are non-trivial, i.e.\ when their values correspond to non-singleton sets of micro-states, then we refer to them as \emph{causal macro-variables}. Figure~\ref{fig:example} illustrates the ground-truth fundamental cause and effect in our simulated neuroscience experiment. The cause $C$ has four states: presence of a vertical bar (v-bar), presence of a horizontal bar (h-bar), presence of both and presence of neither in the image $I$. $C$ causes the effect $E$, which also has four states: presence of pulse, rhythm, both or neither in the activity of a population of neurons in a raster plot. The precise details of these structures (locations of the bars; exact neural spiking times) are irrelevant to the causal interactions in the system, as are the uniform noise in the stimulus images or the strong rhythm generated by the ``bottom'' population of neurons. Despite being an aggregate of micro-variables, $C$ is a well-defined ``causal variable'' as used in the standard framework of causal graphical models. We can define, in a principled way, an intervention on it (analogously for $E$):

\begin{definition}[Macro-level Causal Intervention]
The operation $\Do{C=c}$ on a macro-level cause is given by a manipulation of the underlying micro-variable $\man{I=i}$ to some value $i$ such that $C(i)=c$.
\end{definition}

We can now state the first part of a two-part theorem that justifies the name \emph{fundamental} causal partition. Intuitively, knowing the fundamental causal partition of a system tells us everything there is to know about the causal mechanism implicit in $P(J \mid \man{I})$: Any coarser partition loses some information, any finer partition contains no more causal information.

\begin{theorem}[Sufficient Causal Description, Part 1]
Let $(\calI, \calJ)$ be a causal ml-system and let $E$ be its fundamental causal effect. Let $\mathbf{E}$ be $E$ applied sample-wise to a sample from the system (so that e.g. $\mathbf{E}(j_1,\cdots,j_k)=(E(j_1), \cdots, E(j_k))$). Then among all the partitions of $\calJ$,  $\mathbf{E}$ is the minimal sufficient statistic for $P(J\mid\man{i})$ for any $i\in\calI$. \label{thm:minimal_suff}
\end{theorem}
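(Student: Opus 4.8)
The plan is to separate the statement into its two halves---sufficiency (``a finer partition adds no causal information'') and minimality (``a coarser partition loses causal information'')---and to extract from the definition of $\sim_J$ the one structural fact that drives both: within a single causal class of $\calJ$ the response probability $P(j\mid\man{i})$ is \emph{the same} for every member $j$ of the class, for every $i$. Writing $\phi_i$ for this common value, we get $P(j\mid\man{i})=\phi_i(E(j))$, i.e.\ the conditional law of $J$ under any manipulation factors through $E$.

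First I would establish sufficiency via the Fisher--Neyman factorization. For an i.i.d.\ sample $(j_1,\dots,j_k)$ drawn under $\man{i}$ the likelihood is $\prod_{l} P(j_l\mid\man{i})=\prod_{l}\phi_i(E(j_l))$, which depends on the sample only through $\mathbf{E}(j_1,\dots,j_k)$ (take $h\equiv 1$ in the factorization). Hence $\mathbf{E}$ is sufficient for the family $\{P(\cdot\mid\man{i})\}_{i\in\calI}$. Equivalently, the conditional $P(J\mid \mathbf{E},\man{i})$ is uniform on each class and therefore free of $i$; this is exactly the sense in which refining $E$ cannot add information about the manipulation.

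Next, minimality. Since the sample likelihood factorizes over the draws, it suffices to argue at the level of single-observation partitions of $\calJ$ and then lift sample-wise. I would show that every sufficient partition refines $E$: if a cell contained $j_1,j_2$ with $E(j_1)\neq E(j_2)$ then, by the definition of $\sim_J$, there is a manipulation $\man{i^*}$ with $P(j_1\mid\man{i^*})\neq P(j_2\mid\man{i^*})$, so the cell-level law no longer determines $P(\cdot\mid\man{i^*})$ and sufficiency fails. Consequently $E$ is a function of every sufficient partition, i.e.\ the coarsest sufficient one, which is the assertion that $\mathbf{E}$ is minimal sufficient.

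The step I expect to be the main obstacle is making ``minimal sufficient'' precise so that $E$---rather than something strictly coarser---is genuinely the minimum. The textbook (Lehmann--Scheff\'e) equivalence would merge $j_1,j_2$ whenever the ratio $P(j_1\mid\man{i})/P(j_2\mid\man{i})$ is \emph{constant} in $i$, which is weaker than the \emph{equality} demanded by $\sim_J$; the two coincide only when no two micro-states have response probabilities that are proportional but unequal (in particular one must check the degenerate regime in which $J$ is independent of the manipulation). The theorem's minimality should therefore be read against retaining the exact conditional $P(J\mid\man{i})$---the paper's ``loses information''---and the crux of the argument is that it is the equality in the definition of a causal class, together with the equiprobability of micro-states within a class, that makes $E$ the coarsest partition from which $P(J\mid\man{i})$ can be reconstructed for every $i$.
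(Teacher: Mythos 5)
Your proposal is correct and follows essentially the same route as the paper's proof: sufficiency via the Fisher--Neyman factorization using the fact that $P(j\mid\man{i})$ is constant on each causal class (the paper writes the likelihood as $\Pi_m p_{E_m}^{\#(E_m)}$, which is your $\prod_l\phi_i(E(j_l))$ rearranged), and minimality by arguing that any partition merging two micro-states from distinct causal classes cannot be sufficient, so that $E$ is a function of every sufficient partition. The one substantive point where you go beyond the paper is the proportional-ratios edge case, and you are right to flag it: the paper's contradiction step asserts that the likelihood ratio $p_{E(j_1^1)}/p_{E(j_1^2)}$ ``depends on the parameters,'' which is only guaranteed if the template probabilities are treated as freely varying; for the actual fixed finite family $\{P(\cdot\mid\man{i})\}_{i\in\calI}$ this ratio can be constant in $i$ yet different from one (take $P(j_1\mid\man{i})=2P(j_2\mid\man{i})$ for all $i$), in which case the coarser partition merging $j_1$ and $j_2$ is still sufficient in the Lehmann--Scheff\'e sense and the stated minimality fails. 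Your proposed repair---reading minimality as ``coarsest partition from which $P(J\mid\man{i})$ can be reconstructed exactly''---is the right one, and it is in effect how the paper itself recasts the claim in Part~2 of the theorem (lossless recovery); under that reading your argument that a merged cell containing $j_1,j_2$ with $P(j_1\mid\man{i^*})\neq P(j_2\mid\man{i^*})$ cannot determine the within-cell split is complete.
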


The proof (in Supplementary Material) is a standard application of Fisher's factorization theorem. Unfortunately, the theorem does not do justice to the intuition that the fundamental cause, too, compresses information about the causal mechanisms of the system. However, unless we assume a distribution $P(\man{I})$ over the interventions, we cannot apply the notion of a sufficient statistic to manipulations in the $\calI$ space. Following Pearl's approach, we refrain from specifying intervention distributions and instead return to this question using a different technique in Sec.~\ref{sec:composite}.

\section{LEARNING THE FUNDAMENTAL CAUSAL STRUCTURE}
We first show how to learn causal macro-variables from \emph{experimental} data, sampled directly from $P(J\mid\man{I})$. Experimental data is generally costly to obtain, so in the following section we  prove the Fundamental Causal Coarsening Theorem that shows one can use \emph{observational} data sampled according to $P(J\mid I)$ to minimize the number of experiments needed to establish the fundamental causal partitions. 

\subsection{Learning With Experimental Data}
\label{sec:experiments}
Consider a dataset $\{(i,j)\}$ of size $N$ generated experimentally from a causal ml-system $(\calI, \calJ)$: each $i$ is chosen by the experimenter arbitrarily, and each $j$ is generated from $P(J\mid \man{i})$. Algorithm~\ref{alg:fundamental_learning} takes such data as input, and computes the fundamental cause and effect of the system. We relegate the detailed discussion of the algorithm (as well as the details of our implementation) to Supplementary Material. Here, instead, we provide a step-by-step illustration of the algorithm's application to the simulated neuroscience problem from Sec.~\ref{sec:example}.

\IncMargin{0.5em}\begin{algorithm}[t!]
\caption{\textbf{Learning the Fundamental Cause and Effect}}
\label{alg:fundamental_learning}
\SetKwFunction{DimReduce}{DmR}
\SetKwFunction{DensLearn}{DensLearn}
\SetKwFunction{Cluster}{Clstr}
\SetKwFunction{Classify}{Clsfy}

\SetKwData{Eft}{Eft}
\SetKwData{Css}{Css}

\SetKwInOut{Input}{input}
\SetKwInOut{Output}{output}

\Input{$\mathcal{D}_{csl} = {\{(i_1, j_1)},\cdots,(i_N, j_N)\}$ -- causal data.\\
       $j_k \sim P(J \mid \man{i_k})$, $1\leq j\leq N$. \\
       $\DensLearn$ -- a density learning routine.\\
       $\Cluster$ -- a clustering routine.\\
       $\Classify$ -- a classification routine.}
\Output{$C\colon \calI\to \{1,\cdots,S_C\}$ -- the fundamental cause.\\
        $E\colon \calJ\to \{1,\cdots,S_E\}$ -- the fundamental effect.\\}
\BlankLine
$P_{J\mid\hat{I}} \leftarrow \DensLearn(\mathcal{D}_{csl})$\;
$\textit{Eft}_{mic} \leftarrow \{[P_{J\mid\hat{I}}(i,j_1),\cdots,P_{J\mid\hat{I}}(i,j_N)] \mid i\in\calI\}$\;\label{alg:fundamental_learning:effect}
$\textit{Cs}_{mic} \leftarrow \{[P_{J\mid\hat{I}}(i_1,j),\cdots,P_{J\mid\hat{I}}(i_N,j)] \mid j\in\calJ\}$\;\label{alg:fundamental_learning:cause}
$C' \leftarrow \Cluster(\textit{Eft}_{mic})$\tcp*{$\text{range}(C')=\{1,\cdots,S_C\}$}
$E' \leftarrow \Cluster(\textit{Cs}_{mic})$\tcp*{range($E'$)=$\{1,\cdots,S_E\}$}
$\textit{Eft}_{mac} \leftarrow \{[P(e_1|c),...,P(e_{S_E}|c)] \mid c=1,...,S_C\}$\;
$\textit{Cs}_{mac} \leftarrow \{[P(e| c_1),..., P(e| c_{S_C})] \mid e=1,...,S_E\}$\;
Merge $C'$ clusters with similar $\textit{Eft}_{mac}$ values.\;\label{alg:fundamental_learning:cmerge1}
Merge $E'$ clusters with similar $\textit{Cs}_{mac}$ values.\;\label{alg:fundamental_learning:cmerge2}
$C \leftarrow \Classify((i_1, C'(i_1)), \cdots, (i_N, C'(i_N)))$\;
$E \leftarrow \Classify((j_1, E'(j_1)), \cdots, (j_N, E'(j_N)))$\;
\end{algorithm}\DecMargin{0.5em}

We generated 10000 images $i$ similar to those shown in Fig.~\ref{fig:example}: 2500 h-bar images (with varying h-bar locations and uniform pixel noise), 2500 v-bar images, 2500 ``h-bar + v-bar'' images and 2500 uniform noise images. Of course, this is an ideal dataset that we can only design because we know the ground-truth causal features. In practice, the  experimenter would want to choose as broad a class of stimuli as reasonable. Next, for each image we generated a corresponding time-averaged, neuron-index-shuffled raster plot $j$ according to $P(J\mid \man{i})$. We then applied Alg.~\ref{alg:fundamental_learning} to this experimental data. The output is for each image $i$ an estimate of its causal class $C(i)$, and for each raster $j$ an estimate of its effect class $E(j)$, as defined in Fig.~\ref{fig:example}.
\begin{figure}
\centering
\includegraphics{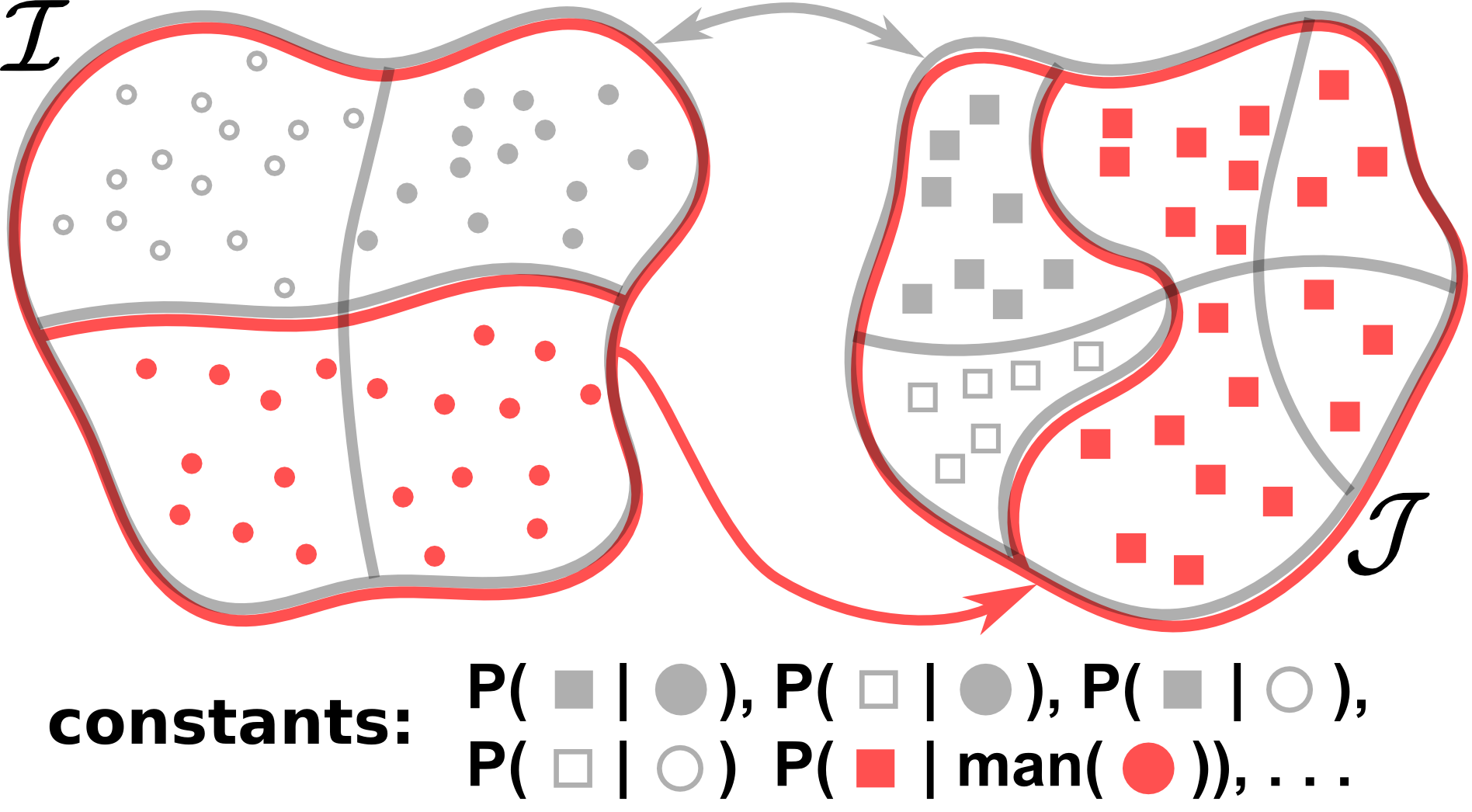}
\caption{\textbf{The Fundamental Causal Coarsening Theorem (fCCT)}. Gray lines delineate the observational partitions on $\calI$ and $\calJ$. Observational probabilities are constant within the gray regions: for any pair $i_1, i_2$ belonging to the same $\calI$ region, and any pair $j_1, j_2$ belonging to the same $\calJ$ region, $P(j_1 \mid i_1)=P(j_1 \mid i_2)=P(j_2 \mid i_1)=P(j_2 \mid i_2)$. Red lines delineate the causal partition: within each red region, probabilities of causation $P(j \mid \man{i})$ are equal. fCCT states that in general, the causal partition coarsens the observational partition, as in the picture.}
\label{fig:cct}
\end{figure}

 Figure~\ref{fig:experiment} shows how Alg.~\ref{alg:fundamental_learning} recovers the macro-variable causal mechanism of our simulated single-unit-recording experiment. Three remarks are in order:

\begin{compactenum}
\item For purposes of illustration, the macro-level causal variables are very simple. Nevertheless, the procedure is completely general and could be applied to detect causal macro-variables that do not admit such a simple description. We believe the method holds promise for applications in a broad set of scientific domains.
\item The algorithm does not simply cluster $\calI$ and $\calJ$. Instead, it clusters the probabilistic effects of points in $\calI$, and the probabilities of causation for points in $\calJ$. Its crucial function is to ignore any structures that are not related to the causal effect of $I$ on $J$. In our example, the raster plots contain salient structure that is causally irrelevant: With probability 0.5, the ``bottom'' subpopulation of neurons spikes in a synchronized rhythm. Simply clustering $\calJ$ would sub-divide the true causal classes in half. Fig.~\ref{fig:experiment}e shows that the algorithm finds the correct solution.
\item There are many possible alternatives to Alg.~\ref{alg:fundamental_learning}, each with different advantages and disadvantages. The particular solution we chose is a direct application of the definitions, and works well in practice. However, it does introduce additional assumptions --- in particular, $p(J\mid \man{I})$ needs to be smooth both as a function of $j$ and $i$ for the algorithm to work perfectly.
\end{compactenum}

\subsection{The Fundamental Coarsening Theorem and Experiment Design}\label{sec:fct}
If only data sampled from $P(J \mid I)$ is available, it is in general impossible to determine the fundamental causal partition. The causal effect from $I$ to $J$ cannot always be separated from the confounding due to $H$ (recall Fig.~\ref{fig:example}). Instead, we can directly apply Alg.~\ref{alg:fundamental_learning} to the observational data to obtain the \emph{observational partition} of a causal ml-system:

\begin{definition}[Observational Partition, Observational Class]
Let $(\calI, \calJ)$ be a causal ml-system. The \emph{observational partition of $\calI$}, denoted by $\Pi_o(\calI)$, is the partition induced by the equivalence relation $\sim_I$ such that $i_1 \sim_I i_2$ if and only if $P(J\mid I=i_1) = P(J\mid I=i_2).$ The \emph{observational partition of $\calJ$}, denoted by $\Pi_o(\calJ)$, is the partition induced by the equivalence relation $\sim_J$ such that $j_1 \sim_J j_2$ if and only if $\forall_{i\in\calI}\;P(j_1\mid i) = P(j_2 \mid i)$.
A cell of an observational partition is called an \emph{observational class} (of $\calI$ or $\calJ$).
\end{definition}

Spurious correlates can introduce structure in $\Pi_o$ that is irrelevant to $\Pi_c$ (see Eq.~\eqref{eq:genmodel} and the discussion on spurious correlates in \citet{Chalupka2015}). Nevertheless, we can aim to minimize the number of experiments needed to obtain the fundamental causal partition. The following theorem (which generalizes the Causal Coarsening Theorem of~\citep{Chalupka2015}) shows that in general, observational data can be efficiently transformed into causal knowledge about an ml-system.

\begin{theorem}[Fundamental Causal Coarsening] Among all the generative distributions of the form shown in Fig.~\ref{fig:genmodel} which induce given observational partitions $(\Pi_o(\calI), \Pi_o(\calJ))$:
\begin{compactenum}
\item The subset of distributions that induce a fundamental causal partition $\Pi_c(\calI)$ that is \emph{not} a coarsening of the observational partition $\Pi_o(\calI)$ is Lebesgue measure zero, and 
\item The subset of distributions that induce a fundamental causal partition $\Pi_c(\calJ)$ that is \emph{not} a coarsening of the $\Pi_o(\calJ)$ is Lebesgue measure zero.
\end{compactenum}
\end{theorem}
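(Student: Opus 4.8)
The plan is to follow the strategy behind the original Causal Coarsening Theorem, but now carried out on both the $\calI$- and $\calJ$-sides, and to reduce the whole statement to a finite collection of real-analytic non-degeneracy claims. First I would fix the observational partitions $(\Pi_o(\calI),\Pi_o(\calJ))$ and regard a generative distribution as a point $\theta$ in the (finite-dimensional, since every variable is discrete) parameter space coordinatized by $P(H)$, $P(I\mid H)$, and $P(J\mid I,H)$, subject to the simplex constraints and to the polynomial equalities and inequalities that encode ``$\theta$ induces exactly these observational partitions.'' Because $\calI$ and $\calJ$ are finite, the event ``$\Pi_c(\calI)$ is not a coarsening of $\Pi_o(\calI)$'' is the finite union, over the pairs $i_1,i_2$ that lie in a common observational class, of the events ``$i_1$ and $i_2$ fall in different causal classes.'' A finite union of null sets is null, so it suffices to bound each such pair separately; the symmetric reduction on the $\calJ$-side handles part (2).

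Next I would write the two conditionals explicitly as functions of $\theta$. From the generative model of Fig.~\ref{fig:genmodel} and the definition of $\man{\cdot}$ we have $P(j\mid i)=\big(\sum_H P(j\mid i,H)P(i\mid H)P(H)\big)\big/\big(\sum_H P(i\mid H)P(H)\big)$ and $P(j\mid\man{i})=\sum_H P(j\mid i,H)P(H)$; both are rational, hence real-analytic, in $\theta$, with denominators that do not vanish on the region $P(i)>0$. Fixing a pair $i_1,i_2$ in the same observational class, the constraint set $S_{12}=\{\theta:\forall j\;P(j\mid i_1)=P(j\mid i_2)\}$ is a real-analytic variety, and the bad event is $S_{12}\cap\{\exists j\;P(j\mid\man{i_1})\neq P(j\mid\man{i_2})\}$. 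Introducing the analytic causal-discrepancy function $g_{12}(\theta)=\sum_{j}\big(P(j\mid\man{i_1})-P(j\mid\man{i_2})\big)^2$, the bad event is exactly $\{g_{12}\neq 0\}$ restricted to $S_{12}$, so the target is to show that $\{g_{12}\neq 0\}$ meets each top-dimensional component of $S_{12}$ only in a set of measure zero.

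The crux — and the step I expect to be the main obstacle — is precisely this non-degeneracy statement, and it does \emph{not} follow from soft analyticity alone. The usual dichotomy (a real-analytic function on a connected analytic manifold either vanishes identically or has a measure-zero zero set) in fact cuts the wrong way here: unless one first proves that $g_{12}$ vanishes \emph{identically} on each relevant stratum of $S_{12}$, the dichotomy would make causal \emph{equality}, not inequality, the exceptional event. The real work is therefore to show that the observational equalities $P(j\mid i_1)=P(j\mid i_2)$ pin down the confounding weights $P(H\mid i)$ tightly enough to force $g_{12}\equiv 0$ off a lower-dimensional subvariety. Intuitively one must rule out the ``coincidental balancing'' in which $i_1$ and $i_2$ induce the same conditional law of $J$ while their interventional laws differ because the posteriors $P(H\mid i_1),P(H\mid i_2)$ are tuned against the prior $P(H)$; the difficulty is that the confounding coordinates $P(I\mid H)$ a priori supply exactly the freedom to do this, so the argument must unwind the factorization and exhibit the balancing locus as genuinely thin inside $S_{12}$. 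Locating that exceptional subvariety explicitly, rather than appealing to a generic-position heuristic, is the delicate heart of the proof.

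Finally, part (2) is handled symmetrically. For a pair $j_1,j_2$ in the same observational class of $\calJ$ I would replace $g_{12}$ by $\tilde g_{12}(\theta)=\sum_i\big(P(j_1\mid\man{i})-P(j_2\mid\man{i})\big)^2$ and run the same stratified analytic argument, now using the defining equalities $P(j_1\mid i)=P(j_2\mid i)$ for all $i$; here Theorem~\ref{thm:minimal_suff} is convenient, since it already singles out $P(J\mid\man{i})$ as the sufficient-statistic coordinate in which the $\calJ$-discrepancy must be read. Taking the finite union of the pairwise null sets obtained on each side then yields the two stated measure-zero conclusions.
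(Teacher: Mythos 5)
You have the scaffolding right---the reduction to finitely many pairs, the rational (hence analytic) dependence of $P(j\mid i)$ and $P(j\mid\man{i})$ on the parameters, and, importantly, the correct diagnosis that the naive analytic dichotomy cuts the wrong way---but the proposal stops exactly where the proof has to start. You name the non-degeneracy statement (that the causal discrepancy $g_{12}$ must vanish identically on almost every top-dimensional stratum of the observational variety $S_{12}$) as ``the delicate heart of the proof'' and then do not prove it. That is a genuine gap, not a deferred routine verification: as you yourself observe, the confounding coordinates $P(I\mid H)$ appear to supply precisely the freedom for the coincidental balancing you need to exclude, so no generic-position appeal closes the argument in your formulation.

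The paper's proof sidesteps your version of the crux by inverting the quantifier structure, and this inversion is the missing idea. Parametrize the joint by $\alpha_{j,h,i}=P(j\mid h,i)$, $\beta_{i,h}=P(i\mid h)$, $\gamma_h=P(h)$, and slice the parameter space by \emph{fixing} $(\alpha,\gamma)$. On each slice the interventional probabilities $P(j\mid\man{i})=\sum_h\alpha_{j,h,i}\gamma_h$ are constants, so the causal discrepancy is frozen, and the bad event for a pair $j_1,j_2$ with $P(j_1\mid\man{i})\neq P(j_2\mid\man{i})$ becomes the \emph{observational} equality $\sum_h(\alpha_{j_1,h,i}-\alpha_{j_2,h,i})\beta_{h,i}\gamma_h=0$, a polynomial constraint in the free coordinates $\beta$ alone (Eq.~\eqref{eq:constraint}). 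Its non-triviality is witnessed constructively: if it held for every $\beta$, then concentrating $\beta_{\cdot,i}$ on each $h^*$ in turn would force $(\alpha_{j_1,h^*,i}-\alpha_{j_2,h^*,i})\gamma_{h^*}=0$ for all $h^*$, contradicting $P(j_1\mid\man{i})\neq P(j_2\mid\man{i})$. The lemma of \citet{Okamoto1973} then gives measure zero on each slice, and Fubini's theorem integrates over $(\alpha,\gamma)$. This direction---``given causal inequality, observational equality is thin in $\beta$''---never requires showing $g_{12}\equiv 0$ on strata of $S_{12}$, which is exactly the step you could not supply. Two further mismatches with your plan: the conditioning on a fixed observational partition, which you build into the variety $S_{12}$ from the outset, is handled in the paper by a separate reparametrization step (solving the observational constraints for a subset of the $\alpha$'s so that $\beta$ and $\gamma$ still range over a full-dimensional box, Eq.~\eqref{eq:final_constraint}); and part (1) needs no new argument at all---the paper obtains it by instantiating the Causal Coarsening Theorem of \citet{Chalupka2015} with the fundamental effect $E$ in the role of the ``target behavior'', whereas your symmetric treatment would re-derive it from scratch while inheriting the same unresolved crux on the $\calI$-side.
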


In other words, the observational partition over $\calI$ may subdivide come cells of the causal partition, but not \textit{vice-versa}, and the observational partition over $\calJ$ may subdivide some cells of the causal partition, but not \textit{vice-versa}. Fig.~\ref{fig:cct} illustrates the Fundamental Causal Coarsening Theorem (fCCT).

\begin{figure*}[t!]
\centering
\includegraphics{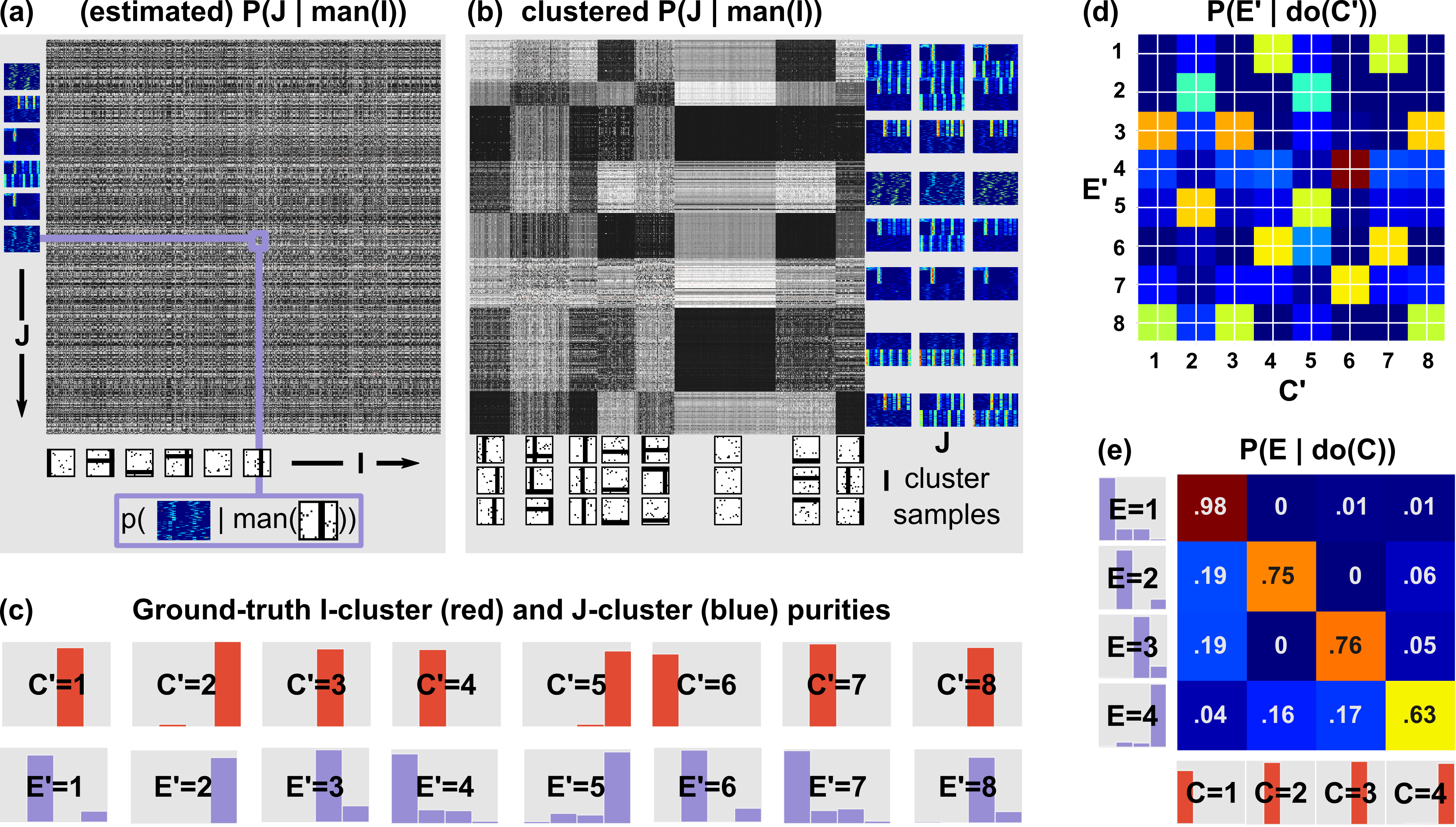}
\caption{\textbf{Learning the fundamental causal partition}. The figure demonstrates Algorithm~\ref{alg:fundamental_learning} applied to the example from Fig.~\ref{fig:example}. \textbf{(a)} Given a dataset $\{(i_k, j_k)\}_{k=1...N}$, the algorithm learns data density $P(j\mid \man{i})$ and forms a matrix in which the $kl$-th entry is the estimated $P(j_l\mid\man{i_k})$. \textbf{(b)} The rows and columns of the matrix are clustered. Each cluster of rows corresponds to a cell of $C'$, the proposed fundamental partition of $\calI$, and each cluster of columns corresponds to a cell of $E'$, the proposed fundamental partition of $\calJ$.  \textbf{(c)} The histograms show the ground-truth causal class of the points in each cluster (this ground truth is unknown to the algorithm). For example, the cell $E' = 8$ contains a majority of raster plots that contain the ``30Hz (top)'' causal structure; it also contains some ``30Hz (top) + pulse'' rasters. \textbf{(d)} The algorithm computes the probability table $P(E' \mid \Do{C'})$ by counting the co-occurrences of the cluster labels. \textbf{(e)} Finally, the rows of this table are merged according to their similarity to form the fundamental partition $\Pi_C$ of the data, and the columns are merged to form $\Pi_E$. For example, rows $E'=1$ and $E'=3$ of the table in (d) are similar---indeed, the cluster purity histograms indicate that both rows correspond to sets of images with a vertical bar. $P(E\mid\Do{C})$ is very similar to the ground-truth table (see Fig.~\ref{fig:example}), and the final $C, E$ clusters are pure (as shown along the axes of the  table).}
\label{fig:experiment}
\end{figure*}

We prove the theorem in Supplementary Material. fCCT suggests an efficient way to learn causal features of a system starting with observational data only: first, learn the observational partition using Alg.~\ref{alg:fundamental_learning}. Next, pick (at least) one $i$ belonging to each observational class and estimate $P(J \mid \man{i})$. To obtain the causal partition, merge the observational cells whose $i$'s induce the same distribution over $J$. Then, pick at least one $j$ from each observational class and merge the cells whose $j$'s induce the same $P(j\mid \man{i})$ for all $i\in\calI$.

\section{SUBSIDIARY CAUSES AND EFFECTS}
\label{sec:composite}
The behavior of our simulated neural population is affected by two independent causal mechanisms: the presence of a v-bar can create a neural pulse, and the presence of an h-bar can induce a 30Hz neural rhythm. We wrote ``$P(\text{30Hz}\!=\!1 \mid \Do{\text{v-bar}\!=\!1}) = .8$ and $P(\text{pulse}\!=\!1 \mid \Do{\text{h-bar}\!=\!1}) = .8$'', and said that these two mechanisms compose to bring about the observed effects. We now formalize under what conditions higher-level variables, such as ``30Hz'' or ``v-bar'', can arise from the fundamental causal partition.

\begin{definition}[Subsidiary Causal Variables]
\label{def:subsidiary}
Let $C$ and $E$ be the fundamental cause and effect of a causal ml-system. Let $\bar{C}$ and $\bar{E}$ be strict coarsenings of $C$ and $E$. Denote by $c_1(l), \cdots, c_{N_l}(l)$ the cells of $C$ that belong to the $l$-th cell of $\bar{C}$. We say that $\bar{C}$ and $\bar{E}$ are  \emph{subsidiary causal variables}, and that $\bar{C}$ is a \emph{subsidiary cause} of the \emph{subsidiary effect} $\bar{E}$ if (i) $\forall_{l} P(\bar{E}\mid\Do{C=c_1(l)}) = \cdots = P(\bar{E}\mid\Do{C=c_{N_l}(l)})$, and (ii) $P(\bar{E}\mid\Do{\bar{C}=\bar{c}_1}) \neq P(\bar{E}\mid\Do{\bar{C}=\bar{c}_2})$ for any distinct $\bar{c}_1$ and $\bar{c}_2$ in the  range of $\bar{C}$.
\end{definition}

According to the definition, any coarsening of $C$ and $E$ that aspires to be a subsidiary cause-effect pair has to satisfy two conditions. 
First, manipulations on the subsidiary cause $\bar{C}$ have to be well-defined. The definition guarantees that any two $i_1, i_2$ for which $\bar{C}(i_1) = \bar{C}(i_2)$ generate the same distribution over the subsidiary effect, so that $P(\bar{E}\mid \Do{\bar{C}=\bar{C}(i_1)})=P(\bar{E}\mid \Do{\bar{C}=\bar{C}(i_2)})$. In our example, producing an image with an h-bar induces the neural pulse with probability $.8$. The probability of the pulse is indifferent to the presence/absence of a v-bar (or any other structure) in the image (see also Fig.~\ref{fig:subsidiary}a,b). On the other hand, we claimed that v-bars cause rhythms, not pulses (see Fig.~\ref{fig:subsidiary}c). What shows formally that v-bars do not cause pulses? Producing an image $i$ with a v-bar but no h-bar gives us $P(\text{pulse}\mid \man{i})=0$, but if $i$ contains both h- and v-bars, we have $P(\text{pulse}\mid\man{i})=.8$. This disagrees with our definition of what it takes to be a causal variable: the manipulation on the macro-cause v-bar is not well-defined with respect to the macro-effect pulse, as the effects of micro-variables belonging to the same macro-variable causal class are not the same. We have what \citet{Spirtes2004} call an ``ambiguous manipulation'' of v-bar with respect to the pulse. 

\IncMargin{0.5em}\begin{algorithm}[t!]
\caption{\textbf{Finding Subsidiary Variables}}
\label{alg:subsidiary}
\SetKwFunction{Part}{Partitions}
\SetKwFunction{Range}{range}
\SetKwFunction{Effect}{effect}

\SetKwInOut{Input}{input}
\SetKwInOut{Output}{output}

\Input{$C,E$ -- the fundamental cause and effect (and the corresponding partitions).}
\Output{$\mathcal{S}=(C^1, E^1),\cdots,(C^N, E^N)$ -- subsidiary variables of the system.}
\BlankLine
$\mathcal{S}\leftarrow \emptyset$\;
$c_1,\cdots,c_m \leftarrow \Range(C)$\;
$e_1,\cdots,e_n \leftarrow \Range(E)$\;

\For{$\bar{E}\in\Part(E)$}{
  \For{$\bar{e}\in\Range(\bar{E})$}{
    $P(\bar{e}\mid\Do{C=c_k})\leftarrow \displaystyle{\sum_{e_l\in\bar{e}}}P(e_l \mid \Do{C=c_k})$\;\label{alg:subsidiary:pbaredoc}
  }
  Define $\Effect\colon c_k \mapsto P(\bar{E}\mid\Do{C=c_k})$\;
  Let $c_i\sim_{\bar{C}} c_j \Leftrightarrow \Effect(c_i)=\Effect(c_j)$\;
  $\Pi_{\bar{C}}\leftarrow$ partition of $\Range{C}$ induced by $\sim_{\bar{C}}$\;
  $\bar{C}\leftarrow$ random variable corresponding to $\Pi_{\bar{C}}$\;
  $\mathcal{S} \leftarrow \mathcal{S}\cup (\bar{C}, \bar{E})$\;
}
\end{algorithm}\DecMargin{0.5em}

The  second condition in the subsidiary variable definition ensures that the values of subsidiary causes are only distinct when they have distinct effects. A succinct answer to the question ``what causes the neural pulse?'' is ``the presence of a horizontal bar'' --- not ``two states: one corresponding to the presence of a horizontal bar along with the presence of a vertical bar; the other to the presence of a horizontal bar without the presence of a vertical bar''. The two states have the exact same probabilistic effect, and therefore should be combined to one.

Together, the  two conditions ensure that subsidiary causes and effects allow for well-defined, parsimonious manipulations. Equipped with the notion of subsidiary causal variables and an understanding of what it takes to define $P(\bar{E}\mid\Do{\bar{C}})$, we can complete our Sufficient Causal Description theorem: 

\begin{theorem}[Sufficient Causal Description, Part 2]
The fundamental causal variables $C$ and $E$ losslessly recover $P(j\mid \man{i})$. No other (subsidiary) causal variables losslessly recover $P(j\mid\man{i})$. Any other partition of $(\calI, \calJ)$ is either finer than $C, E$ or does not define unambiguous manipulations. In this sense, the fundamental causal partition corresponds to the coarsest partition that losslessly recovers $P(j\mid\man{i})$.
\label{thm:minimal_compr}
\end{theorem}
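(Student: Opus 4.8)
The plan is to reduce all four assertions to one structural fact about $P(j\mid\man{i})$. First I would observe that the two defining equivalence relations force $P(j\mid\man{i})$ to be constant on ``rectangles''. By $\sim_J$, for fixed $i$ the value $P(j\mid\man{i})$ is identical for all $j$ in a common effect class; by $\sim_I$, for fixed $j$ it is identical for all $i$ in a common cause class. Hence there is a function $p$ with $P(j\mid\man{i}) = p(E(j),C(i))$ for every $i,j$. Summing over an effect class $e$ yields $P(E=e\mid\Do{C=c}) = |e|\,p(e,c)$, where $|e|$ is the number of micro-states in $e$, so that $p(e,c) = P(E=e\mid\Do{C=c})/|e|$. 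This proves the first assertion: from $C$, $E$ and the macro-table $P(E\mid\Do{C})$ one reconstructs $P(j\mid\man{i})$ exactly, so $C$ and $E$ losslessly recover the causal mechanism.

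The core of the argument is the claim that a pair of partitions inducing variables $X$ on $\calI$ and $Y$ on $\calJ$ losslessly recovers $P(j\mid\man{i})$ if and only if $X$ refines $C$ and $Y$ refines $E$. For the forward direction, lossless recovery means $P(j\mid\man{i}) = \phi(X(i),Y(j))$ for some $\phi$; fixing $j$ shows $P(j\mid\man{i})$ depends on $i$ only through $X(i)$, so $X(i_1)=X(i_2)$ forces $P(j\mid\man{i_1})=P(j\mid\man{i_2})$ for every $j$, which by the definition of $\sim_I$ means $C(i_1)=C(i_2)$; thus $X$ refines $C$, and the symmetric argument gives that $Y$ refines $E$. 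For the converse, if $X$ refines $C$ and $Y$ refines $E$ then $C$ and $E$ factor through $X$ and $Y$, so $p(E(j),C(i))$ is a function of $(X(i),Y(j))$ and recovery holds. Since the coarsest pair meeting both refinement conditions is $(C,E)$ itself, the last assertion follows (the fundamental partition is the coarsest lossless one); and since a strict coarsening of $C$ or $E$ cannot refine it, the second assertion follows as well --- no subsidiary (strictly coarser) causal variables recover $P(j\mid\man{i})$.

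For the third assertion I would prove the dichotomy directly. Suppose $(X,Y)$ is not finer than $(C,E)$; then $X$ fails to refine $C$ or $Y$ fails to refine $E$. In the first case there exist $i_1,i_2$ with $X(i_1)=X(i_2)$ but $C(i_1)=c_1\neq c_2=C(i_2)$. By the definition of $\sim_I$ there is a $j^\ast$ with $P(j^\ast\mid\man{i_1})\neq P(j^\ast\mid\man{i_2})$, equivalently $p(e^\ast,c_1)\neq p(e^\ast,c_2)$ for $e^\ast=E(j^\ast)$. This micro-difference survives aggregation into the fundamental effect class, since $P(E=e^\ast\mid\man{i})=|e^\ast|\,p(e^\ast,C(i))$, and therefore $P(E=e^\ast\mid\Do{C=c_1})\neq P(E=e^\ast\mid\Do{C=c_2})$. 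Hence $i_1$ and $i_2$, though sharing a cell of $X$, induce different distributions over $E$ --- exactly the failure of condition (i) of Definition~\ref{def:subsidiary}, i.e. the manipulation of $X$ is ambiguous. The case in which $Y$ fails to refine $E$ is symmetric.

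The step I expect to be the main obstacle is this last dichotomy, specifically the effect variable against which ambiguity is assessed. The statement is clean when ambiguity is measured relative to the fundamental effect $E$, the finest causally meaningful description of $\calJ$; the subtlety is that aggregating the surviving difference $p(e^\ast,c_1)-p(e^\ast,c_2)$ into a coarser, crosscutting $Y$ could in principle cancel it. I would therefore first argue that any partition that is to count as a legitimate causal description must itself refine $E$ on the effect side, which reduces the general case to a coarsening of $C$ and lets me test ambiguity at the resolution of the fundamental effect classes, where the nonzero gap provably cannot be annihilated. Making that reduction precise --- rather than the algebra, which is immediate --- is where the real work lies.
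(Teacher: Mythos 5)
Your argument is correct and takes essentially the same route as the paper's (much terser) proof: both hinge on the fact that $P(j\mid \man{i})$ factors through $(C(i),E(j))$, so that $C$ and $E$ reconstruct it exactly, while any pair of partitions that fails to refine $(C,E)$ must conflate two configurations with distinct micro-level manipulation probabilities. The obstacle you flag at the end --- relative to which effect variable ambiguity of a manipulation is assessed, and whether a crosscutting coarsening of $\calJ$ could cancel the surviving difference --- is a genuine looseness in the theorem's third clause, but the paper's own two-sentence proof does not engage with it either, so your treatment is if anything more careful than the source.
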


The proof is provided in Supplementary Material. The theorem suggests that the use of subsidiary variables is to \emph{ignore} causal information that is not of interest. For example, having discovered the fundamental effects of images on a brain region the neuroscientist might want to focus on the subsidiary effects whose analogues were observed in other brain regions, or in other animals. Alg.~\ref{alg:subsidiary} shows a simple (yet combinatorially expensive) procedure to discover the full set of subsidiary causes and effects in an ml-system. The algorithm iterates over all the possible coarsenings of $E$, the fundamental effect, and computes, for each, the corresponding coarsening (not necessarily strict) of the fundamental cause that adheres to Def.~\ref{def:subsidiary}.

To complete the picture of how the fundamental and subsidiary variables relate to each other, we formalize the intuition that the fundamental causal partition can be a product of its subsidiary variables. Recall that we have defined causal macro-variables as partitions of sets of values of random micro-variables. The composition of causal variables is defined in terms of the product of partitions.

\begin{definition}[Partition Product, Macro-Variable Composition]
Let $\Pi_1$ and $\Pi_2$ be partitions of the same set $X$. The product of the partitions, denoted $\Pi_1 \otimes \Pi_2$, is the coarsest partition of $X$ that is a refinement of both $\Pi_1$ and $\Pi_2$. The set of partitions of $X$ forms a commutative monoid under $\otimes$. The composition $C$ of two causal macro-variables $C_1$ and $C_2$ is defined as the product of the corresponding partitions. In this case, we will use the $\otimes$ operator to write $C=C_1\otimes C_2$.
\end{definition}

Finally, we describe a special class of subsidiary variables to gain additional insight into the fundamental causal structure of ml-systems.

\begin{definition}[Non-Interacting Subsidiary Variables]
Let $C^1, C^2$ be subsidiary causes with respective subsidiary effects $E^1, E^2$. Denote by $(e_1,e_2)$ the cell of $E^1\otimes E^2$ that corresponds to the intersection of a cell $e^1$ of $E^1$ and cell $e_2$ of $E^2$, and analogously for $(c_1, c_2)$. $C^1$ and $C^2$ are \emph{non-interactive} if for any non-empty $(c_1,c_2)$ and $(e_1, e_2)$ we have $P(E^1\otimes E^2\!=\!(e_1,e_2)\mid \Do{C^1\otimes C^2\!=\!(c_1,c_2)}) = P(E^1\!=\!e_1\mid\Do{C^1\!=\!c_1})\times P(E^2\!=\!e_2\mid\Do{C^2\!=\!c_2})$.
\end{definition}

Among all the possible ml-systems, the fundamental causal partition gives rise to \emph{no subsidiary causes} in almost all the cases. The presence of coarse, non-interacting subsidiary causes (such as the h-bar and the v-bar in our example) is a strong indicator of independent physical causal mechanisms that produce symmetries in the fundamental causal structure of the system. Our framework enables the scientist to automatically detect such independent mechanisms from data.

For example, let $C^1$= ``presence of h-bar'', $C^2$= ``presence of v-bar'', $E^1$= ``presence of pulse'', $E^2$= ``presence of rhythm (top)''. We can discover these variables in from data using Alg.~\ref{alg:fundamental_learning} followed by Alg.~\ref{alg:subsidiary}, and check that indeed indeed they are non-interacting. In fact, these two subsidiary variables compose to yield the fundamental causal partition and its probability table--we can write $C=C^1\otimes C^2$ and $E=E^1\otimes E^2$ (see Fig.~\ref{fig:subsidiary}d). 

\begin{figure}
\centering
\includegraphics{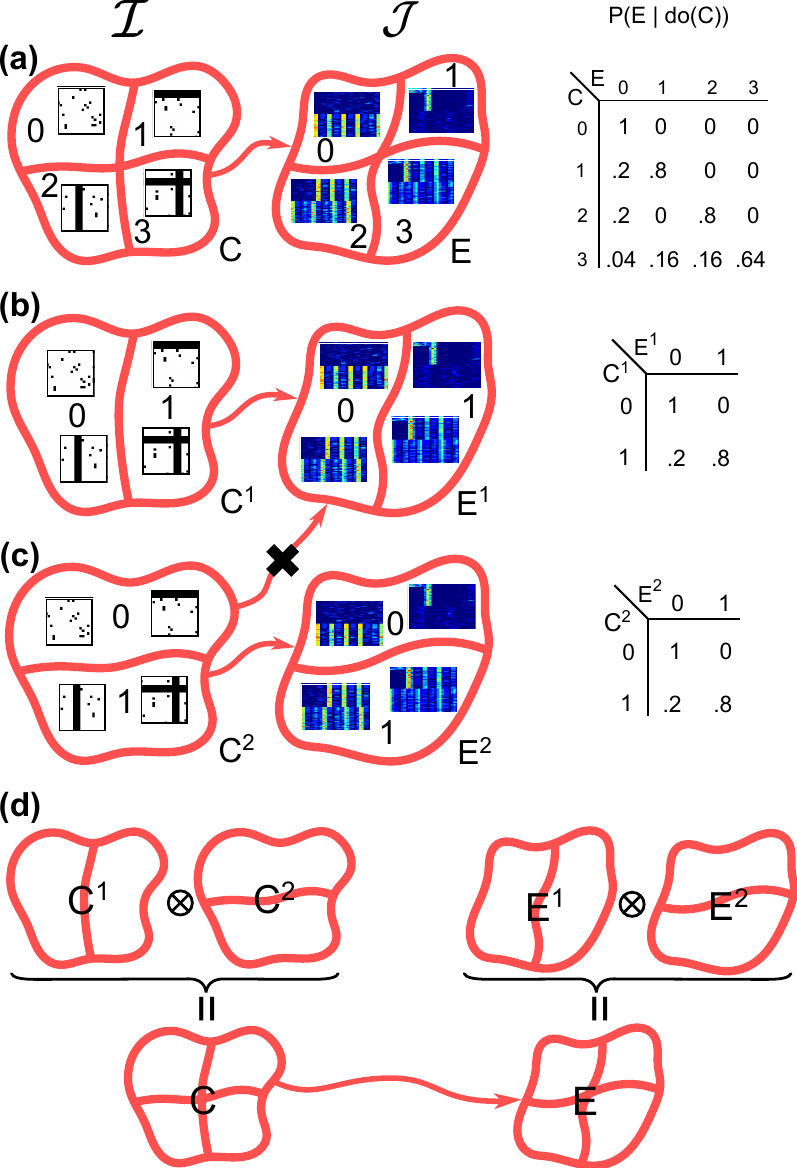}
\caption{\textbf{Subsidiary Causal Variables}. \textbf{(a)} The fundamental cause and effect of the neuroscience example from Fig.~\ref{fig:example}. \textbf{(b)} The subsidiary cause $C^1$, or ``presence of an h-bar''. The corresponding coarsening of $C$ groups together all images which contain no h-bars ($C^1=0$) and all the images which contain an h-bar ($C^1=1$). Similarly, the subsidiary effect of $C^1$ groups together raster plots with and without the ``pulse'' behavior. %Note the tables for $P(E^1 \mid \Do{C^1})$ on the right. By Def.~\ref{def:subsidiary} we have, for example, $P(E^1=0 \mid \Do{C^1=0}) = P(E=0\mid\Do{C=0})+P(E=2\mid\Do{C=0}) = P(E=0\mid\Do{C=2})+P(E=2\mid\Do{C=2})$. 
\textbf{(c)} The subsidiary cause $C^2$, or ``presence of a v-bar'' and its effect $E^2$. Note that $E^1$, for example, is \emph{not} an effect of $C^2$. If it was, the effects of manipulations $\Do{C^2=0}$ as well as $\Do{C^2=1}$ would be ambiguous: $P(E^1=1 \mid \Do{C^2=1})$ could be either $.8$ or $0$, depending on whether the manipulated micro-variable contains an h-bar or not. \textbf{(d)} $C^1$ and $C^2$ are non-interacting subsidiary causes. The effect of their product is the product of their effects. In fact, their composition forms the fundamental causal partition of the system.}
\label{fig:subsidiary}
\end{figure}

\section{DISCUSSION AND CONCLUSION}\label{sec:discussion}
In general it is possible that macro-variable causes and effects are barely coarser (if at all) than the corresponding micro-variables. The hope that $C$ and $E$ have a ``manageable'' cardinality, such as those in Fig.~\ref{fig:example}, is similar in spirit to standard assumptions in both supervised and unsupervised learning. There, a set of continuous data is clustered into a discrete number of subsets according to some feature of interest. Here the ``feature of interest'' is the causal relationship between $C$ and $E$. 

Given that the discussion of macro-causal relations is commonplace in scientific discourse, we take the scientific endeavors mentioned in the introduction to be predicated on the assumption that micro-level descriptions are not all there is to the phenomena under investigation. Whether or not there in fact are macro-level causes that justify such an assumption is, in light of our theoretical account, an empirical question since -- taking the definitions literally -- macro-causes cannot be defined arbitrarily. When applying the theory to practical cases one needs to assume that micro-variables do not lump together atoms that belong to different ``true fundamental partition'' cells. What happens when this assumption is violated is an open question.

Our approach to the automated construction of causal macro-variables is rooted in the theory of computational mechanics \citep{Shalizi2001,Shalizi2001a,Shalizi2004}. Even though we have focused on learning from experimental data, we cleanly account for the interventional/observational distinction that is central to most analyses of causation. This distinction is entirely lost in heuristic approaches, such as that of \citet{Hoel2013}. Finally, we note that our work is orthogonal to recent efforts to learn causal graphs over high-dimensional variables \citep{Entner2012,Parviainen2015}. Given a directed causal edge between two such variables, our method can extract a macro-variable representation of the relationship, ultimately simplifying the causal graph.

Altogether, we have an account of how causal variables can be identified that does not rely on a definition obtained from domain experts. Given its theoretical generality, we expect our method to be useful in many domains where micro-level data is readily available, but where the relevant causal macro-level factors are still poorly understood. \vspace{-.1in}
\subsubsection*{Acknowledgements}
KC’s and PP’s work was supported by the
ONR MURI grant N00014-10-1-0933. FE would like to
thank Cosma Shalizi for pointers to many relevant results
this paper builds on.

\pagebreak
\section{SUPPLEMENTARY MATERIAL: SUFFICIENT CAUSAL DESCRIPTION THEOREM, PARTS 1 AND 2}
\begin{theorem}[Sufficient Causal Description]
Let $(\calI, \calJ)$ be a causal ml-system and let $C$ and $E$ be its fundamental cause and effect. Let $\mathbf{E}$ be $E$ applied sample-wise to a sample from the system (so that e.g. $\mathbf{E}(j_1,\cdots,j_k)=(E(j_1), \cdots, E(j_k))$). Then:
\begin{compactenum}
\item Among all the partitions of $\calJ$,  $\mathbf{E}$ is the \emph{minimal} sufficient statistic for $P(J\mid\man{i})$ for any $i \in \calI$, and
\item $C$ and $E$ losslessly recover $P(j\mid \man{i})$. No other (subsidiary) causal variable losslessly recovers $P(j\mid\man{i})$. Any other partition is either finer than $C, E$ or does not define unambiguous manipulations. In this sense, the fundamental causal partition corresponds to the coarsest partition that losslessly recovers $P(j\mid\man{i})$.
\end{compactenum}

\end{theorem}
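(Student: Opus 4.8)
The plan is to derive both parts from one structural fact and then read off the rest from the definitions of the fundamental partitions. First I would observe that, by the definitions of $\sim_I$ and $\sim_J$, the causal kernel $P(j\mid\man{i})$ is constant in $i$ on each cell of $C$ and constant in $j$ on each cell of $E$; hence there is a well-defined function $\phi$ with $P(j\mid\man{i})=\phi(C(i),E(j))$ for all $i,j$. I would pair this with the bookkeeping identity $P(E\!=\!e\mid\Do{C\!=\!c})=|e|\,\phi(c,e)$, where $|e|=\#\{j\in\calJ : E(j)=e\}$, which links $\phi$ to the macro-level probability table and the cell sizes. This factorization is the backbone of both parts.

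For Part 1 I would regard $\{P(J\mid\man{i})\}_{i\in\calI}$ as a statistical family over $\calJ$ indexed by $i$, with a size-$k$ sample drawn i.i.d.\ from $P(J\mid\man{i})$. Substituting the factorization, the joint likelihood becomes $\prod_{l=1}^{k}\phi(C(i),E(j_l))$, which depends on the sample only through $\mathbf E(j_1,\dots,j_k)$; applying Fisher's factorization theorem with trivial base factor $h\equiv 1$ then shows $\mathbf E$ is sufficient. For minimality I would invoke the Lehmann--Scheff\'e characterization: any sufficient partition of $\calJ$ has an $i$-independent within-cell conditional, so the likelihood ratio of two micro-states in a common cell is constant in $i$. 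The goal is to conclude that such a partition must refine $E$, making $E$ the coarsest, hence minimal, sufficient partition.

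For Part 2 I would argue purely from the factorization. Lossless recovery by $(C,E)$ is immediate, since $\phi(c,e)=P(E\!=\!e\mid\Do{C\!=\!c})/|e|$ is reconstructible from the two partitions and the macro table, so $P(j\mid\man{i})=\phi(C(i),E(j))$ is fully determined. To show that no strict coarsening suffices, I would take a coarsening that merges two distinct fundamental classes and use the definitions directly: a merged effect pair $e_1\neq e_2$ has $\phi(c,e_1)\neq\phi(c,e_2)$ for some $c$ by $\sim_J$, and a merged cause pair $c_1\neq c_2$ has $\phi(c_1,e)\neq\phi(c_2,e)$ for some $e$ by $\sim_I$; in either case the coarser description cannot return both values, so recovery is lossy --- in particular no strict subsidiary pair is lossless. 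For the unambiguity clause I would show that a cause partition $\Pi$ of $\calI$ admits well-defined $\Do{}$-interventions on $E$ exactly when each cell has constant $P(E\mid\man{\cdot})$; because distinct $C$-classes carry distinct effect profiles, $C(i)$ is recoverable from $P(E\mid\man{i})$, so unambiguity forces every cell of $\Pi$ into a single $C$-class, i.e.\ $\Pi$ refines $C$, while any non-refinement spans two $C$-classes and is ambiguous. Combining the cause and effect sides, every partition is either a refinement of $(C,E)$ or induces an ambiguous manipulation, and $(C,E)$ is the coarsest lossless, unambiguous description.

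The main obstacle is the minimality half of Part 1. Sufficiency is a one-line factorization, but the Lehmann--Scheff\'e criterion only delivers \emph{proportionality} of the within-cell profiles, whereas $\sim_J$ requires their \emph{equality}; a partition that merges micro-states whose causal profiles are proportional but unequal can remain sufficient while being strictly coarser than $E$. I expect to close this gap either by using the normalization of $P(\cdot\mid\man{i})$ to pin the proportionality constant to one, or, failing that in full generality, by restricting to the generic case (in the spirit of the measure-zero arguments accompanying the Coarsening Theorem), where exactly-proportional-but-unequal profiles do not arise. The Part 2 claims, by contrast, should follow cleanly once the factorization through $\phi$ is established, as they reduce to reading off the profile-equality definitions of $C$ and $E$.
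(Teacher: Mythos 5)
Your proposal is correct and follows essentially the same route as the paper's proof: sufficiency of $\mathbf{E}$ via Fisher's factorization theorem applied to the likelihood $\prod_k p_{E(j_k)}$, minimality via the likelihood-ratio (Lehmann--Scheff\'e) criterion, and Part~2 by reading the two-argument factorization $P(j\mid\man{i})=\phi(C(i),E(j))$ off the definitions of $\sim_I$ and $\sim_J$. The one place you go beyond the paper is your observation that the factorization criterion only forces within-cell causal profiles to be \emph{proportional} rather than \emph{equal}; this is a genuine issue, and it is present (silently) in the paper's own argument, which concludes a contradiction from the claim that the ratio $p_{E(j_1^1)}/p_{E(j_1^2)}$ ``depends on the parameters''---a claim that fails exactly when the two profiles are proportional but unequal across all $i$, in which case the merged partition really is sufficient yet strictly coarser than $E$. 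Of your two proposed repairs, note that normalization alone does not close the gap (proportional unequal profiles are compatible with $\sum_j P(j\mid\man{i})=1$ whenever a third cell absorbs the slack), so the genericity/measure-zero restriction is the viable fix; the paper's proof does not address this case at all.
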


\begin{proof}
1. We first prove that $\mathbf{E}$ is a sufficient statistic. Recall that we assumed $\calJ$ to be discrete, although possibly of vast cardinality. For any $j_k\in\calJ$, write $P(j_k\mid\man{i}) = p_{j_k}$ for the corresponding categorical distribution parameter. Let $\texttt{range}(E)=\{E_1,\cdots,E_M\}$ be the set of causal classes of $J$. By Definition~3 there is a number of ``template'' probabilities $p_{E_1},\cdots, p_{E_M}$ such that $p_{j_k}=p_{E_k}$ if and only if $E(j_k)=E_k$. Consider an i.i.d. sample $\mathbf{j}=j_1,\cdots,j_l$ from $P(J\mid\man{i})$. Then
\begin{align*}
P(j_1,\cdots,j_l\mid\man{i}) &=\Pi_{k=1}^lp_{j_k}\\
                            &=\Pi_{m=1}^Mp_{E_m}^{\#(E_m)},
\end{align*}
where $\#(E_m)\triangleq\Sigma_{k=1}^l\mathbbm{1}\{E(j_k)==E_m\}$ is the number of samples with causal class $E_m$. Since the sample density depends on the samples only through $C$ and $E$ it follows from the Fisher's factorization theorem that $\mathbf{E}$ is a sufficient statistic for $P(J\mid\man{i})$ for any $i\in\calI$. 

Now, we prove the minimality of $E$ among all the partitions of $\calJ$. Consider first any refinement of $E$. One can directly apply the reasoning above to show that the cell assignment in such a partition is also a sufficient statistic. However, any refinement is not the \emph{minimal} sufficient statistic, as the fundamental causal partition is its coarsening--- and thus also its function. Now, consider any partition that is not the fundamental causal partition, and is not its refinement. Call it $E'$. Assume, for contradiction, that $\mathbf{E'}$ is a sufficient statistic for $P(J\mid\man{i})$. Then, by the factorization theorem, $P(j_1,\cdots,j_k\mid\man{i})$ would factorize as $h(j_1,\cdots,j_k)g(E'(j_1),\cdots,E'(j_k))$, where $h$ does not depend on the parameters $p_{j_l}$. Now, take some $j_1^1, j_1^2$ such that $E(j_1^1)\neq E(j_1^2)$ but $E'(j_1^1)=E'(j_1^2)$ (such a pair must exists since $E'$ is not a refinement of $E$ and is not equal to it). Then
\[\frac{P(j_1^1, j_2, \cdots, j_k\mid\man{i})}{P(j_1^2, j_2, \cdots, j_k\mid\man{i})}=\frac{p_{E(j_1^1)}}{p_{E(j_1^2)}},\]
\begin{align*}
&\frac{P(j_1^1, j_2, \cdots, j_k\mid\man{i})}{P(j_1^2, j_2, \cdots, j_k\mid\man{i})}=\\
&\phantom{WWWWW}=\frac{h(j_1^1,\cdots,j_k)g(E'(j_1^1),\cdots,E'(j_k))}{h(j_1^2,\cdots,j_k)g(E'(j_1^2),\cdots,E'(j_k))}\\
&\phantom{WWWWW}=\frac{h(j_1^1,\cdots,j_k)}{h(j_1^2,\cdots,j_k)}
\end{align*}
which, as already stated, does not depend on the parameters of the distribution -- a contradiction.

2. That $P(J\mid\man{i})$ can be recovered from $C$ and $E$ follows directly from the definition of a causal ml-system and its fundamental causal partition. That it cannot be recovered losslessly from any partition that is not a refinement of $C$ and $E$ follows again from the fact that for any such partitions $C'$ and $E'$ there must be is at least one pair $(i_1, j_1), (i_2, j_2)$ for which $p(E'(j_1)\mid\Do{C'(i_1)}) = p(E'(j_2)\mid\Do{C'(i_2)})$ even though $p(j_1\mid\man{i_1})\neq p(j_2\mid\man{i_2})$.
\end{proof}

We note that the first part of Theorem~1 indicates that $\mathbf{E}$ is only a minimal sufficient statistic among all partitions of $\calJ$, i.e.\ among the set of possible causal variables. It is not the minimal sufficient statistic over all possible sufficient statistics for $P(J \mid \man{i})$. In particular, a histogram is a minimal sufficient statistic for the multinomial distribution and is a function of $\mathbf{E}$, but a histogram does not correspond to a partition of $\calJ$.

\section{SUPPLEMENTARY MATERIAL: DETAILS AND IMPLEMENTATION OF ALGORITHM 1}
First, the algorithm uses a density learning routine to estimate $P(J\mid \man{I})$ given the samples. We don't specify the density learning routine, as that is highly problem-dependent. In our experiments, dimensionality reduction with autoencoders~\citep{Hinton2006} followed by kernel density estimation worked well. More sophisticated approaches are readily available, for example RNADE~\citep{Uria2013}. 
Steps~2 and~3 constitute the core of the algorithm: In Step~2, a vector of (estimated) densities $[P(j_1\mid \man{i}),\cdots,P(j_N\mid \man{i})]$ is calculated for each $i_k$ in the dataset ($1\leq k\leq N$). That is, each $i_k$ corresponds to a vector that contains information about the probability of each $j_l$  ($1\leq l\leq N$) occurring given a manipulation $\man{i_k}$ (note that in the original dataset, $i_k$ might have only appeared as paired with one effect $j_k$, sampled from $P(J\mid \man{i})$). Similarly, Step~3, computes for each $j_l$ a vector of estimated densities of $j_l$ occurring given an intervention on each $i_k$. 

Clustering these vectors (Step 4 \& 5) makes it possible to group together all the $i$'s with similar effects, and all the $j$'s with similar causes --- that is, to learn the fundamental causal partition. The number of cells of the fundamental partition is unknown in advance, but it is safe to over-cluster the data. Our implementation uses the Dirichlet Process Gaussian Mixture Model~\citep{Rasmussen1999} for clustering with a flexible number of clusters, but again the algorithm stays clustering-routine-agnostic.

After the initial clustering it should now be easy to merge clusters belonging to the same true causal class, as the probabilistic patterns of mergeable clusters are expected to be similar. The macro-variable cause/effect probability vectors are estimated in Steps~8 and~9. These are analogues to the micro-variable cause/effect density vectors estimated in Step~2. However, instead of estimating the density of the micro-variable data, they count the normalized histograms of conditional probabilities of the $\calJ$ cluster given the $\calI$ clusters. These histograms are aggregates of large numbers of datapoints, and should smooth out errors in the original density estimation. Thus, even if the original clustering algorithm overestimates the number of cells in the fundamental partition, we can hope to be able to merge them based on similarities in the macro-variable histogram vectors. In our experiment, we merge the macro-variable cause/effect probabilities by thresholding the KL-divergence between any two vectors belonging to the same cluster. However, since the number of datapoints to cluster is likely to be very small, the best solution in practice is to cluster them by hand.

By Step~8, the algorithm returns causal labels for the original data samples. These labeled samples can be used to visualize the fundamental causes and effects using the original data samples. To generalize the fundamental cause and effect to the whole $\calI$ and $\calJ$ space, the algorithm trains a classifier using the original data and the learned causal labels.

\section{SUPPLEMENTARY MATERIAL: THE FUNDAMENTAL CAUSAL COARSENING THEOREM}

\begin{theorem}[Fundamental Causal Coarsening] Among all the generative distributions of the form shown in Fig.~2 (main text) which induce given observational partition $(\Pi_o(\calI), \Pi_o(\calJ))$:
\begin{compactenum}
\item The subset of distributions that induce a fundamental causal partition $\Pi_c(\calI)$ that is \emph{not} a coarsening of the observational partition $\Pi_o(\calI)$ is Lebesgue measure zero, and 
\item The subset of distributions that induce a fundamental causal partition $\Pi_c(\calJ)$ that is \emph{not} a coarsening of the $\Pi_o(\calJ)$ is Lebesgue measure zero.
\end{compactenum}
\end{theorem}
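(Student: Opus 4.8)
The plan is to adapt the measure-theoretic template behind the original Causal Coarsening Theorem to the case where both $\calI$ and $\calJ$ carry full high-dimensional conditionals. First I would fix the (finite) cardinality of $H$ and coordinatize the model by $\theta=(P(H),\{P(i\mid h)\},\{P(j\mid i,h)\})$, a point of a product $\Theta$ of probability simplices. On $\Theta$ every observational conditional $P(j\mid i)=\sum_h P(j\mid i,h)P(i\mid h)P(h)/P(i)$ and every interventional conditional $P(j\mid\man i)=\sum_h P(j\mid i,h)P(h)$ is a rational, hence real-analytic, function of $\theta$. Prescribing the observational partitions $(\Pi_o(\calI),\Pi_o(\calJ))$ imposes the analytic equalities $P(\cdot\mid i_1)=P(\cdot\mid i_2)$ for every pair $i_1,i_2$ in a common cell of $\Pi_o(\calI)$ and $P(j_1\mid\cdot)=P(j_2\mid\cdot)$ for every pair in a common cell of $\Pi_o(\calJ)$, together with strict inequalities across cells. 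These carve out a relatively open subset $M$ of a real-analytic subvariety of $\Theta$, and I read ``Lebesgue measure zero'' as vanishing $(\dim M)$-dimensional volume on $M$.

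I would then reduce the claim to a finite union of single-pair statements. Because $\calI$ and $\calJ$ are finite, ``$\Pi_c(\calI)$ is not a coarsening of $\Pi_o(\calI)$'' holds precisely when some pair $i_1,i_2$ lying in a common cell of $\Pi_o(\calI)$ satisfies $P(\cdot\mid\man{i_1})\neq P(\cdot\mid\man{i_2})$. Introducing the analytic defect $D_{i_1 i_2}(\theta)=\sum_{j\in\calJ}\bigl(P(j\mid\man{i_1})-P(j\mid\man{i_2})\bigr)^2$, it suffices to show that for each such pair the set $\{\theta\in M: D_{i_1 i_2}(\theta)\neq 0\}$ has measure zero in $M$; a union over the finitely many pairs then yields Part 1, and the same argument with the roles of $i$ and $j$ interchanged (now with the universal quantifier ranging over $i\in\calI$) yields Part 2.

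The decisive tool is the dichotomy that a real-analytic function on a connected analytic manifold is either identically zero or has a zero set of measure zero. Applied to $D_{i_1 i_2}$ on each connected component of $M$, it forces an all-or-nothing alternative: on a component either $D_{i_1 i_2}\equiv 0$, or $D_{i_1 i_2}\neq 0$ almost everywhere. Consequently the theorem is \emph{equivalent} to the analytic identity $D_{i_1 i_2}\equiv 0$ on $M$ for every within-cell pair; that is, the entire content is to show that the observational-equality constraints analytically imply the causal-equality constraints on $M$. To attack this I would use the coupling $P(H)=\sum_i P(i)\,P(H\mid i)$ between the prior and the posteriors together with the fact that the observational constraint equates the \emph{whole} conditional vectors of $i_1$ and $i_2$, and try to eliminate $H$ between the observational and interventional systems.

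This elimination is where I expect the real difficulty to lie, and it is the heart of the theorem. Matching the full observational vectors equates a convex combination of the columns $P(\cdot\mid i,h)$ under the posterior weights $P(H\mid i)$, whereas causal agreement is the same combination under the prior weights $P(H)$; one must therefore rule out the ``conspiratorial'' confounding configurations in which these coincide observationally without coinciding causally, and show that any residual such configurations are confined to a proper analytic subvariety. Certifying this---most plausibly by exhibiting an explicit witness distribution on $M$ and a dimension count, or by a resultant/elimination computation showing the exceptional locus has positive codimension---is the step I would budget the most effort for. The surrounding bookkeeping (analyticity of each $D_{i_1 i_2}$, finiteness of the union, and the reduction of Part 2 to Part 1 by the $i\leftrightarrow j$ symmetry) is routine.
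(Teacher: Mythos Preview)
Your dichotomy step is where the approach breaks down. You correctly observe that if $D_{i_1i_2}$ is real-analytic on a connected component of $M$, then either $D_{i_1i_2}\equiv 0$ there or $\{D_{i_1i_2}=0\}$ has measure zero. You then conclude that the theorem is equivalent to the analytic \emph{identity} $D_{i_1i_2}\equiv 0$ on $M$, i.e.\ that observational equality of conditionals \emph{always} forces causal equality. But that identity is false. Take $H,I,J$ binary with $P(h{=}0)=P(h{=}1)=\tfrac12$, $P(i_1\mid h{=}0)=0.9$, $P(i_1\mid h{=}1)=0.1$, and set $P(j_1\mid i_1,h{=}0)=0$, $P(j_1\mid i_1,h{=}1)=1$, $P(j_1\mid i_2,h)=0.1$. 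Then $P(j_1\mid i_1)=P(j_1\mid i_2)=0.1$ (so this point lies on your $M$ for the $\Pi_o(\calI)$ that merges $i_1,i_2$), yet $P(j_1\mid\man{i_1})=0.5\neq 0.1=P(j_1\mid\man{i_2})$. So $D_{i_1i_2}\not\equiv 0$ on $M$, and your own dichotomy would then force $D_{i_1i_2}\neq 0$ almost everywhere on that component---the opposite of what you need. Your last paragraph (``show that any residual such configurations are confined to a proper analytic subvariety'') is exactly the statement the dichotomy \emph{rules out} once you know $D$ is not identically zero; the two paragraphs are in direct contradiction.

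The paper avoids this trap by applying the ``non-trivial polynomial $\Rightarrow$ measure zero'' lemma to the \emph{observational} constraint, not to the causal one, and by slicing the parameter space differently. It fixes $\alpha=P(J\mid I,H)$ and $\gamma=P(H)$ (which fixes all causal probabilities $P(j\mid\man{i})=\sum_h\alpha_{j,h,i}\gamma_h$), and lets $\beta=P(I\mid H)$ vary. For a pair $j_1,j_2$ with causal disagreement, the observational equality $P(j_1\mid i)=P(j_2\mid i)$ becomes the polynomial equation $\sum_h(\alpha_{j_1,h,i}-\alpha_{j_2,h,i})\beta_{h,i}\gamma_h=0$ in $\beta$; one exhibits a $\beta$ (a vertex of the simplex) where it fails, so by Okamoto's lemma it holds on a measure-zero set of $\beta$'s. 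Fubini over $(\alpha,\gamma)$ then gives the result in the full space, and a separate reparametrization step shows that fixing the observational partition still leaves $\beta,\gamma$ fully free (only some $\alpha$-coordinates become dependent), so the same polynomial-in-$\beta$ argument survives the restriction. In short: you put the analytic function on the wrong side of the argument; the paper fixes the causal side and shows the observational coincidence is the non-generic event, not the other way around. Also, your remark that Part~2 reduces to Part~1 ``by the $i\leftrightarrow j$ symmetry'' is not right---$I$ and $J$ are asymmetric in the generative model, and the paper treats the two parts with separate (though parallel) arguments.
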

\begin{proof} (1) Let $E$ be the fundamental effect of the system. Then $\Pi_c(\calI)$ and $E$ constitute precisely the ``causal partition'' and ``target behavior'' of the system and $\Pi_o(\calI)$ constitutes the ``observational partition'' of the system, as defined by \citet{Chalupka2015}. Thus, the proof of the Causal Coarsening Theorem by~\citet{Chalupka2015} applies directly and proves (1).

(2) While we cannot directly use the proof of~\citet{Chalupka2015}, we follow a very similar proof strategy. The only difference turns out to be in the details of the algebra. We first lay out the proof strategy. Let $j_1, j_2\in\calJ$. We need to show that if $P(j_1 \mid i) = P(j_2\mid i)$ \emph{for every $i\in \calI$}, then also $P(j_1\mid\man{i}) = P(j_2\mid\man{i})$ for every $i$ (for all the distributions compatible with given observational partition, except for a set of measure zero). The proof is split into two parts: (i)~Express the theorem as a polynomial constraint on the space of all $P(i,j,h)$ distributions. (ii)~Show that the polynomial constraint is not trivial. This, by~\citet{Meek1995}, implies that among \emph{all} $P(i,j,h)$ distributions, the fundamental causal partition on $J$ is a coarsening of the fundamental observational partition. (iii)~Prove that (i) and (ii) apply to ``all the distributions which induce a given observational partition'' by showing that this restriction results in a simple reparametrization of the distribution space.\\

(2i) Let $H$ be the hidden variable of the system, with cardinality $K$; let $J$ have cardinality $N$ and $I$ cardinality $M$. We can factorize the joint on $I, J, H$ as $P(J, I, H)=P(J\mid H,I)P(I\mid H)P(H)$. $P(J\mid H,I)$ can be parametrized by $(N-1)\times K\times M$ parameters, $P(I\mid H)$ by $(M-1)\times K$ parameters, and $P(H)$ by $K-1$ parameters, all of which are independent. 

Call the parameters, respectively, 
\begin{align*}
\alpha_{j,h,i} &\triangleq P(J=j \mid H=h, I=i)\\
\beta_{i,h} &\triangleq P(I=i\mid H=h)\\
\gamma_h &\triangleq P(H=h)
\end{align*}
We will denote parameter vectors as
\begin{align*}
\alpha &= (\alpha_{j_1,h_1,i_1}, \cdots, \alpha_{j_{N-1},h_K,i_M}) \in\mathbb{R}^{(N-1)\times K\times M} \\
\beta &= (\beta_{i_1,h_1}, \cdots, \beta_{i_{N-1}, h_K}) \in\mathbb{R}^{(M-1)\times K}\\
\gamma &= (\gamma_{h_1}, \cdots, \gamma_{h_{K-1}}) \in\mathbb{R}^{K-1},
\end{align*}
where the indices are arranged in lexicographical order. This creates a one-to-one correspondence of each possible joint distribution $P(J,H,I)$ with a point $(\alpha,\beta,\gamma)\in P[\alpha,\beta,\gamma] \subset \mathbb{R}^{(N-1)\times K^2(K-1)\times M(M-1)}$, where $P[\alpha, \beta, \gamma]$ is the $(N-1)\times K^2(K-1)\times M(M-1)$-dimensional simplex of multinomial distributions.

To proceed with the proof, we pick any point in the $P(J\mid H,I) \times P(H)$ space: that is, we fix the values of $\alpha$ and $\gamma$. The only free parameters are now the $\beta_{i,h}$; varying these values creates a subset of the space of all the distributions which we will call 
\begin{equation*}
P[\beta; \alpha,\gamma]=\{(\alpha, \beta,\gamma)\; \mid \; \beta \in [0,1]^{(M-1)\times K}\}.
\end{equation*}
$P[\beta; \alpha, \gamma]$ is a subset of $P[\alpha,\beta,\gamma]$ isometric to the $[0,1]^{(M-1)\times K}$-dimensional simplex of multinomials. We will use the term $P[\beta; \alpha, \gamma]$ to refer both the subset of $P[\alpha,\beta,\gamma]$ and the lower-dimensional simplex it is isometric to, remembering that the latter comes equipped with the Lebesgue measure on $\mathbb{R}^{(M-1)\times K}$.

Now we are ready to show that the subset of $P[\beta; \alpha, \gamma]$ which does not satisfy the Fundamental Causal Coarsening constraint on $\calJ$ is of measure zero with respect to the Lebesgue measure. To see this, first note that since $\alpha$ and $\gamma$ are fixed, the manipulation probabilities $p(j\mid \man{i})=\sum_h \alpha_{j,h,i}\gamma_h$ are fixed for each $i\in\calI,j\in\calJ$. The Fundamental Causal Coarsening constraint on $\calJ$ says ``If for some $j_1,j_2\in\calJ$ we have $p(j_1\mid\man{i})=p(j_2\mid\man{i})$ \emph{for every $i\in\calI$}, then also $p(j_1\mid i)=p(j_2\mid i)$ for every $i$.'' The subset of $P[\beta;\alpha, \gamma]$  of all distributions that \emph{do not} satisfy the constraint consists of the $P(J, H, I)$ for which for some $j_1,j_2\in\calJ$ it holds that 
\begin{equation*}
\forall_i P(j_1\mid i)=P(j_2\mid i) \textrm{ and } P(j_1\mid\man{i})\neq P(j_2\mid\man{i}).
\end{equation*}

We want to prove that this subset is measure zero. To this aim, take any pair $j_1,j_2$ and an $i$ for which $p(j_1\mid\man{i})\neq p(j_2\mid\man{i})$ (if such a configuration does not exist, then the Fundamental Causal Coarsening constraint holds for all the distributions in $P[\beta;\alpha, \gamma]$ and the proof is done). We can write 

\begin{align*}
P(j_1 \mid  i) &= \sum_h P(j_1\mid h, i) P(h\mid i)\\
         &= \frac{1}{P(i)}\sum_h P(j_1 \mid  h,i)P(i\mid h)P(h).
\end{align*}

Since the same equation applies to $P(j_2 \mid i)$, the constraint $P(j_1\mid i) = P(j_2\mid i)$ can be rewritten as

\begin{align*}
&\sum_h P(j_1 \mid  h,i)P(i\mid h)P(h)\\
  &= \sum_h P(j_2 \mid  h,i)P(i\mid h)P(h)\\
\end{align*}
which we can rewrite in terms of the independent parameters as

\begin{equation}
\sum_{h}[\alpha_{j_1,h,i}-\alpha_{j_2,h,i}]\beta_{h,i}\gamma_{h}=0,\label{eq:constraint}
\end{equation}

which is a polynomial constraint on $P[\beta;\alpha, \gamma]$. By a simple algebraic lemma \citep[proven by][]{Okamoto1973}, if the above constraint is not trivial (that is, if there exists $\beta$ for which the constraint does not hold), the subset of $P[\beta;\alpha, \gamma]$ on which it holds is measure zero.\\

(2ii) To see that Eq.~\eqref{eq:constraint} does not always hold, note that if for \emph{any} $h^*$ we set $\beta_{h^*,i}=1$ (and thus $\beta_{h,i}=0$ for any $h\neq h^*$), the equation reduces to 
\begin{align*}
(\alpha_{j_1,h^*,i}-\alpha_{j_2,h^*,i})\gamma_{h^*} = 0.
\end{align*}
Thus if Eq.~\eqref{eq:constraint} was always true, we would have $\alpha_{j_1,h,i}=\alpha_{j_2,h,i}$ or $\gamma_h=0$ for all $h$. However, this directly implies that $p(j_1\mid\man{i})\neq p(j_2\mid\man{i})$, which is a contradiction (the latter expression is false by assumption).

We have now shown that the subset of $P[\beta;\alpha, \gamma]$ which consists of distributions for which $P(j_1\mid i)=P(j_2\mid i)$ --even though $p(j_1\mid\man{i}) \neq p(j_2\mid\man{i}$ for some $i$-- is Lebesgue measure zero. Since there are only finitely many pairs of images $j_1,j_2$ for which the latter condition holds, the subset of $P[\beta;\alpha, \gamma]$ of distributions which violate the Causal Coarsening constraint is also Lebesgue measure zero (a finite sum of measure zero sets is measure zero). The remainder of the proof is a direct application of Fubini's theorem.

For each $\alpha, \gamma$, call the (measure zero) subset of $P[\beta;\alpha, \gamma]$ that violates the Causal Coarsening constraint $z[\alpha, \gamma]$. Let $Z=\cup_{\alpha, \gamma} z[\alpha, \gamma] \subset P[\alpha, \beta, \gamma]$ be the set of all the joint distributions which violate the Causal Coarsening constraint. We want to prove that $\mu(Z)=0$, where $\mu$ is the Lebesgue measure. To show this, we will use the indicator function 

\begin{equation*}
\hat{z}(\alpha, \beta, \gamma) = \left\{
  \begin{array}{l} 1\quad\text{if } \beta \in z[\alpha,\gamma],\\
                    0\quad\text{otherwise}.
  \end{array}
  \right.
\end{equation*}

By the basic properties of positive measures we have 

\begin{equation*}
\mu(Z) = \int_{P[\alpha, \beta, \gamma]}\hat{z}\;d\mu.
\end{equation*}

It is a standard application of Fubini's Theorem for the Lebesgue integral to show that the integral in question equals zero. For simplicity of notation, let
\begin{align*}
\mathcal{A} &= \mathbb{R}^{(N-1)\times K\times M}\\
\mathcal{B} &= \mathbb{R}^{(M-1)\times K}\\
\mathcal{G} &= \mathbb{R}^{K-1}.
\end{align*}

We have 

\begin{align}
\int_{P[\alpha, \beta, \gamma]}\hat{z}\;d\mu 
  &= \int_{\mathcal{A}\times\mathcal{B}\times\mathcal{G}} \hat{z}(\alpha, \beta, \gamma)\, d(\alpha, \beta, \gamma)\notag\\
  &= \int_{\mathcal{A}\times\mathcal{G}}\; \int_\mathcal{B}\hat{z}(\alpha, \beta, \gamma)\, d(\beta)\; d(\alpha, \gamma)\notag \\
  &= \int_{\mathcal{A}\times\mathcal{G}}\; \mu(z[\alpha, \gamma])\; d(\alpha, \gamma) \label{eq:measureZero:indicator}\\
  &= \int_{\mathcal{A}\times\mathcal{G}} 0\, d(\alpha, \gamma)\notag\\
  &= 0\notag.
\end{align}
Equation~\eqref{eq:measureZero:indicator} follows as $\hat{z}$ restricted to $P[\beta;\alpha, \gamma]$ is the indicator function of $z[\alpha, \gamma]$. 

This completes the proof that $Z$, the set of joint distributions over $J, H$ and $I$ that violate the Causal Coarsening constraint, is measure zero.

(2iii)
Finally, we show that (2i) and (2ii) apply if we fix an observational partition on $\calJ$ \emph{a priori}. Fixing the observational partition means fixing a set of observational constraints (OCs)

\begin{align*}
\forall_i p(j^1_1 \mid i)&=\cdots=p(j^1_{N_1}\mid i)=p^1,\\
&\vdots\\
\forall_ip(j^L_1 \mid  i)&=\cdots=p(j^L_{N_L}\mid i)=p^L,
\end{align*}

where $1\leq L\leq N$ is the number of observational classes of $\calJ$ and $N_l$ is the cardinality of the $l$th observational class (so that $N=\sum_l N_l$), and $p^1,\cdots, p^L$ are the numerical values of the observational constraints.

Since $P(J,H,I) = P(H \mid  J,I)P(J\mid I)P(I)$, $P(j\mid i)$ is an independent parameter in the unrestricted $P(J,H,I)$, and the OCs reduce the number of independent parameters of the joint by $M\sum_{l=1}^L (N_l-1)$. We want to express this parameter-space reduction in terms of the $\alpha,\beta$ and $\gamma$ parameterization from (2i) and (2ii). To do this, note first that we can write, for any $j^l_n$,
\[
\sum_hp(j^l_n, h, i) =p^l \sum_hp(h,i).
\]
Now, pick any $h^*$ for which $p(h^*,i)\neq 0$. Then we can write
\[
p(j^l_n, h^*, i) = p^l \sum_hp(h,i)-\sum_{h\neq h^*}p(j^l_n, h, i).
\]
In terms of the $\alpha, \beta, \gamma$ parameterization, this equation becomes
\[
\alpha_{j^l_n, h^*,i}\beta_{h^*,i}\gamma_{h} = p^l \sum_h \beta_{h,i}\gamma_{h}-\sum_{h\neq h^*}\alpha_{j^l_n,h,i}\beta_{h,i}\gamma_{h}
\]
or
\begin{equation}
\alpha_{j^l_n, h^*,i} = \frac{p^l \sum_h \beta_{h,i}\gamma_{h}-\sum_{h\neq h^*}\alpha_{j^l_n,h,i}\beta_{h,i}\gamma{h}}{\beta_{h^*,i}\gamma_{h}}.\label{eq:final_constraint}
\end{equation}
This full set of the OCs is equivalent to the full set of equations of this form, one for each possible $(j^l_n, i)$ combination (to the total of $M\times (N-L)$ equations as expected). Thus, we can express the range of $P(J,H,I)$ distributions consistent with a given observational partition $\Pi_o(\calJ)$ in terms of the full range of $\beta, \gamma$ parameters and a restricted number of independent $\alpha$ parameters.
\end{proof}

\subsubsection*{References}
\renewcommand{\section}[2]{}%
\bibliographystyle{plainnat}
\bibliography{bibliography}
\end{document}